\pgfplotsset{compat=1.16}
\pgfplotsset{every tick label/.append style={font=\tiny}}
\newlength{\starsize}
\newlength{\starspread}
\tikzset{starsize/.code={\setlength{\starsize}{#1}},
         starspread/.code={\setlength{\starspread}{#1}}}
\tikzset{starsize=1mm,
         starspread=3mm}
\pgfqpoint{\starspread}{\starspread}}
\pgfqpoint{\starspread}{\starspread}}
\newcommand*\bigcdot{\mathpalette\bigcdot@{.5}}
\newcommand*\bigcdot@[2]{\mathbin{\vcenter{\hbox{\scalebox{#2}{$\m@th#1\bullet$}}}}}
\newcommand{\bigzero}{\mbox{\normalfont\large\bfseries 0}}
\newcommand{\rvline}{\hspace*{-\arraycolsep}\vline\hspace*{-\arraycolsep}}
\def\header{\vspace{1mm} \noindent}
\newcommand{\ie}{{\it i.e.},\xspace}
\newcommand{\U}{\mathcal{U}\xspace}
\newcommand{\V}{\mathcal{V}\xspace}
\newcommand{\G}{\mathcal{G}\xspace}
\newcommand{\EDG}{\mathcal{E}\xspace}
\newcommand{\OO}{\mathcal{O}\xspace}
\newcommand{\DM}{\mathbf{D}\xspace}
\newcommand{\IM}{\mathbf{I}\xspace}
\newcommand{\MM}{\mathbf{M}\xspace}
\newcommand{\PM}{\mathbf{P}\xspace}
\newcommand{\YM}{\mathbf{Y}\xspace}
\newcommand{\XM}{\mathbf{X}\xspace}
\newcommand{\EM}{\mathbf{E}\xspace}
\newcommand{\ZM}{\mathbf{Z}\xspace}
\newcommand{\UM}{\mathbf{U}\xspace}
\newcommand{\VM}{\mathbf{V}\xspace}
\newcommand{\SVM}{\boldsymbol{\Sigma}\xspace}
\newcommand{\alg}{EAGLE}
\newcommand{\algo}{\textsc{\alg}\xspace}
\newcommand{\ffp}{\textsc{FFP}\xspace}
\newcommand{\dvffp}{\textsc{DV-FFP}\xspace}
\newcommand{\QM}{\mathbf{Q}\xspace}
\newenvironment{customlegend}[1][]{%
    \begingroup
    \csname pgfplots@init@cleared@structures\endcsname
    \pgfplotsset{#1}%
}{%
    \csname pgfplots@createlegend\endcsname
    \endgroup
}%
\def\addlegendimage{\csname pgfplots@addlegendimage\endcsname}
\newcommand\footnoteref[1]{\protected@xdef\@thefnmark{\ref{#1}}\@footnotemark}
\let\oldnl\nl
\newcommand{\nonl}{\renewcommand{\nl}{\let\nl\oldnl}}
\g@addto@macro{\@algocf@init}{\SetKwInOut{Parameter}{Parameters}} 
\definecolor{myred}{HTML}{fd7f6f}
\definecolor{myred_new}{HTML}{D8D8D8}
\definecolor{myred_new2}{HTML}{D7191C}
\definecolor{myblue}{HTML}{7eb0d5}
\definecolor{mygreen}{HTML}{b2e061}
\definecolor{mypurple}{HTML}{bd7ebe}
\definecolor{myorange}{HTML}{ffb55a}
\definecolor{myyellow}{HTML}{ffee65}
\definecolor{mypurple2}{HTML}{beb9db}
\definecolor{mypink}{HTML}{fdcce5}
\definecolor{mycyan}{HTML}{8bd3c7}
\definecolor{myblue2}{HTML}{115f9a}
\definecolor{myred2}{HTML}{c23728}
\definecolor{my_purple}{HTML}{4dbeee}
\definecolor{my_teal}{HTML}{77ac30}
\definecolor{RYB1}{RGB}{141, 211, 199}
\definecolor{RYB2}{RGB}{255, 255, 179}
\definecolor{RYB3}{RGB}{190, 186, 218}
\definecolor{RYB4}{RGB}{251, 128, 114}
\definecolor{RYB5}{RGB}{128, 177, 211}
\definecolor{RYB6}{RGB}{253, 180, 98}
\definecolor{RYB7}{RGB}{179, 222, 105}
\definecolor{B0}{HTML}{3C2F80}
\definecolor{B1}{HTML}{012030}
\definecolor{B2}{HTML}{0162A7}
\definecolor{B3}{HTML}{36A7CF}
\definecolor{B4}{HTML}{9AEBA3}
\definecolor{B5}{HTML}{DAFDBA}
\definecolor{B6}{HTML}{45C4B0}
\definecolor{O1}{HTML}{F29E38}
\definecolor{O2}{HTML}{F28444}
\definecolor{O3}{HTML}{D53E0F}
\definecolor{R1}{HTML}{F2889B}
  \providecommand\BibTeX{{%
    \normalfont B\kern-0.5em{\scshape i\kern-0.25em b}\kern-0.8em\TeX}}}
\begin{document}
\title{Effective Edge-wise Representation Learning in Edge-Attributed Bipartite Graphs}

\author{Hewen Wang}
\affiliation{%
  \institution{National University of Singapore}
  \city{Singapore}
  \country{Singapore}
}
\email{wanghewen@u.nus.edu}
\orcid{0000-0002-9757-4347}

\author{Renchi Yang}
\affiliation{%
  \institution{Hong Kong Baptist University}
  \city{Hong Kong SAR}
  \country{China}
}
\email{renchi@hkbu.edu.hk}
\orcid{0000-0002-7284-3096}

\author{Xiaokui Xiao}
\affiliation{%
  \institution{National University of Singapore}
  \city{Singapore}
  \country{Singapore}
}
\email{xkxiao@nus.edu.sg}
\orcid{0000-0003-0914-4580}

\renewcommand{\shortauthors}{Trovato and Tobin, et al.}

\begin{abstract}
Graph representation learning (GRL) is to encode graph elements into informative vector representations, which can be used in downstream tasks for analyzing graph-structured data and has seen extensive applications in various domains. 
However, the majority of extant studies on GRL are geared towards generating node representations, which cannot be readily employed to perform edge-based analytics tasks in {\em edge-attributed bipartite graphs} (EABGs) that pervade the real world, e.g., spam review detection in customer-product reviews and identifying fraudulent transactions in user-merchant networks. Compared to node-wise GRL, learning edge representations (ERL) on such graphs is challenging due to the need to incorporate the structure and attribute semantics from the perspective of edges while considering the separate influence of two heterogeneous node sets $\U$ and $\V$ in bipartite graphs. To our knowledge, despite its importance, limited research has been devoted to this frontier, and existing workarounds all suffer from sub-par results.


Motivated by this, this paper designs \algo, an effective ERL method for EABGs. Building on an in-depth and rigorous theoretical analysis, we propose the {\em factorized feature propagation} (FFP) scheme for edge representations with adequate incorporation of long-range dependencies of edges/features without incurring tremendous computation overheads.
We further ameliorate FFP as a dual-view FFP by taking into account the influences from nodes in $\U$ and $\V$ severally in ERL. Extensive experiments on 5 real datasets showcase the effectiveness of the proposed \algo models in semi-supervised edge classification tasks. In particular, \algo can attain a considerable gain of at most $38.11\%$ in AP and $1.86\%$ in AUC when compared to the best baselines. 
\end{abstract}

\begin{CCSXML}
<ccs2012>
<concept>
<concept_id>10010147.10010257.10010258.10010259.10010263</concept_id>
<concept_desc>Computing methodologies~Supervised learning by classification</concept_desc>
<concept_significance>500</concept_significance>
</concept>
<concept>
<concept_id>10002950.10003624.10003633.10010917</concept_id>
<concept_desc>Mathematics of computing~Graph algorithms</concept_desc>
<concept_significance>500</concept_significance>
</concept>
<concept>
<concept_id>10010147.10010257.10010293.10010309</concept_id>
<concept_desc>Computing methodologies~Factorization methods</concept_desc>
<concept_significance>500</concept_significance>
</concept>
</ccs2012>
\end{CCSXML}

\ccsdesc[500]{Computing methodologies~Supervised learning by classification}
\ccsdesc[500]{Mathematics of computing~Graph algorithms}
\ccsdesc[500]{Computing methodologies~Factorization methods}

\keywords{graph representation learning; edge classification; attributed graph}

\keywords{graph representation learning, edge classification, attributed graph, bipartite graph}


\maketitle

\section{Introduction}
{\em Edge-attributed bipartite graphs} (EABGs) (a.k.a. attributed interaction graphs \cite{zhang2017learning}) are an expressive data structure used to model the interactive behaviors between two sets of objects $\U$ and $\V$ where the behaviors are characterized by rich attributes. Practical examples of EABGs include reviews from users/reviewers on movies, businesses, products, and papers; transactions between users and merchants; and disease-protein associations.

In real life, EABGs have seen widespread use in detecting spam reviews in e-commerce \cite{yu2023mrfs}, malicious incidents in telecommunication networks \cite{yan2018telecomm}, fraudulent transactions/accounts in finance \cite{wang2022review} or E-payment systems \cite{li2021temporal}, abusive behaviors in online retail websites  \cite{wang2021bipartite}, insider threats from audit events \cite{garchery2020adsage}, and others \cite{fathony2023interaction,wu2017efficient}. The majority of such applications can be framed as edge-based prediction or classification tasks in EABGs.



In recent years, graph representation learning (e.g., graph neural networks and network embedding) has emerged as a popular and powerful technique for graph analytics and has seen fruitful success in various domains \cite{hamilton2017representation}. In a nutshell, GRL seeks to map graph elements in the input graph into feature representations (a.k.a. embeddings), based on which we can perform downstream prediction or classification tasks. However, to our knowledge, most of the existing GRL models, e.g., GCN \cite{Kipf2016}, GraphSAGE \cite{Hamilton2017} and GAT \cite{Velickovivelickovic}, are devised for learning node-wise representations in node-attributed graphs, and {\em edge-wise representation learning} (ERL), especially on EABGs, is as of yet under-explored.
A common treatment for obtaining edge representations is to directly apply the canonical node-wise GRL models \cite{Kipf2016,Hamilton2017,Velickovivelickovic,Wu2019,velivckovic2018deep,brody2022how,huang2023node} to generate node embeddings, followed by concatenating them as the embeddings of the corresponding edges. Despite its simplicity, this methodology falls short of not only the accurate preservation of the graph topology from the perspective of edges (demanding an effective combination of node embeddings) but also the incorporation of the edge attributes in EABGs, thereby resulting in compromised embedding quality.
Another category of workarounds is to simply transform the original EABGs into node-attributed unipartite graphs by converting the edges into nodes and connecting them if they share common endpoints in the input EABGs. In doing so, the node-wise GRL techniques can be naturally adopted on such projected graphs for deriving edge representations.
Unfortunately, aside from information loss of the bipartite structure in the input EABG $\G$ by the simple transformation \cite{zhou2007bipartite,yang2024efficient}, such projection-based approaches rely on constructing an edge-to-edge graph $\G^{\prime}$, which often entail immense space consumption (up to $O(m^2)$ in the worst case) due to the scale-free property of real-world graph $\G$, i.e., a few nodes connecting to a significant amount of nodes in $\G$ and creating a multitude of edge-to-edge associations in $\G^{\prime}$~\cite{yang2022efficient}.
Recently, several efforts \cite{wang2023efficient,bielak2022attre2vec,jo2021edge,gao2019edge2vec} have been specifically invested towards learning edge-wise feature representations. However, these ERL models are either designed for unipartite graphs or hypergraphs and hence, cannot readily be applied to EABGs for high-quality representations, as they are unable to capture the unique characteristics of bipartite graphs, particularly the underlying semantics of connections to nodes in $\U$ and $\V$ from two heterogeneous sources.



To remedy the deficiencies of existing works, this paper presents \algo (\underline{E}dge-wise Bip\underline{A}rtite \underline{G}raph Representation \underline{LE}arning) for effective ERL in EABGs. By taking inspiration from the numeric analysis \cite{ma2021unified,yang2021attributes,zhu2021interpreting} of the most popular GRL solutions, i.e., classic message-passing (a.k.a. feature-propagation) GNNs, we begin by formalizing the edge-wise representation learning objective as an optimization problem, while considering the respective influence of edge associations with two sets of heterogeneous nodes $\U$ and $\V$. Through our theoretical insights into the optimal solution to the optimization problem, the derived feature propagation rules, and their connections to the well-established Markov chain theory, we unveil the necessity of preserving long-range dependencies \cite{wu2021representing} of edges in edge representations on EABGs. Based thereon, we propose a {\em factorized feature propagation} (FFP) scheme to enable efficient and effective long-range feature propagation for generating edge representations in \algo. Furthermore, we upgrade \algo with the {\em dual-view factorized feature propagation} (DV-FFP) for flexible and full exploitation of semantics from two sets of nodes $\U$ and $\V$ in EABGs. More precisely, instead of combining edge associations to $\U$ and $\V$ via a given hyperparameter for subsequent ERL, DV-FFP learns two sets of edge embeddings using the connections to $\U$ and $\V$, respectively, followed by an aggregator function that combines them as the final representations. Following previous work, our \algo models are trained by feeding the final edge representations into the loss function for the semi-supervised edge classification.

We evaluate the proposed \algo models against 9 baselines on 5 real EABGs in terms of semi-supervised edge classification tasks. The experimental results exhibit that our \algo models consistently achieve the best empirical performance over 5 datasets with remarkable gains, further validating the effectiveness of FFP and DV-FFP schemes in \algo. 
Notably, on the academic graphs AMiner and OAG dataset, \algo can obtain $38.11\%$ and $11.97\%$ performance gains in terms of average precision (AP) over the best competitor, indicating the superiority of \algo in learning predictive edge representations on EABGs.

The remainder of this paper is structured as follows.
After presenting the preliminaries and formal problem definition in Section \ref{sec:preliminary}, we design the basic \algo model with \ffp in Section \ref{sec:solution}. 
We further introduce an enhanced \algo model with dual-view \ffp in Sections \ref{sec:solution-opt}.
Experiments are conducted in Section \ref{sec:experiments}. Section \ref{sec:relatedwork} reviews related studies, and Section \ref{sec:conclude} concludes the paper.
\section{Preliminaries}\label{sec:preliminary}
Throughout this paper, sets are denoted by calligraphic letters, e.g., $\V$, and $|\V|$ is used to denote the cardinality of the set $\V$. Matrices (resp. vectors) are written in bold uppercase (resp. lowercase) letters, e.g., $\MM$ (resp. $\mathbf{x}$). The superscript $\MM^{\top}$ is used to symbolize the transpose of matrix $\MM$. $\MM[i]$ ($\MM[:,i]$) is used to represent the $i$-th row (resp. column) of matrix $\MM$. Accordingly, $\MM[i,j]$ denotes the $(i,j)$-th entry in matrix $\MM$. For each vector $\MM[i]$, we use $\|\MM[i]\|$ to represent its $L_2$ norm, i.e., $\|\MM[i]\|=\sqrt{\sum_{j=1}^{d}{\MM[i,j]^2}}$ and $\|\MM\|_F$ to represent the Frobenius norm of $\MM$, i.e., $\|\MM\|_F=\sqrt{\sum_{i=1}^{n}\sum_{j=1}^{d}{\MM[i,j]^2}}$.

\subsection{Edge-Attributed Bipartite Graphs}
\begin{definition}[\bf Edge-Attributed Bipartite Graphs (EABG)]
An EABG is defined as $\G=(\U\cup \V,\EDG, \XM)$, where $\U$ and $\V$ represent two disjoint node sets, $\EDG$ consists of the inter-set edges connecting nodes in $\U$ and $\V$, and each edge $e_{i}$ is associated with a length-$d$ attribute vector $\XM[i]$.
\end{definition}
\begin{figure}[H]
\vspace{-3ex}
\centering
\includegraphics[width=0.8\columnwidth]{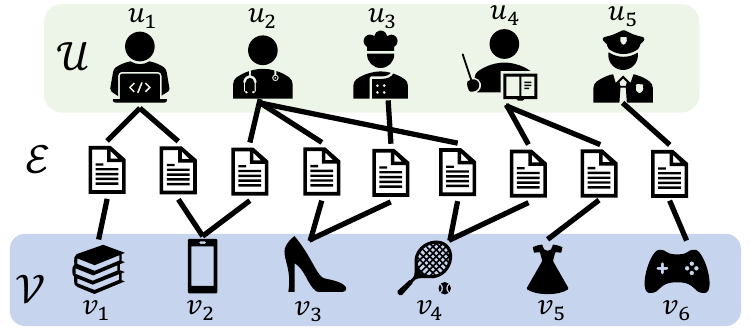}
\vspace{-3mm}
\caption{An Example EABG}\label{fig:toy}
\vspace{-3mm}
\end{figure}
Figure \ref{fig:toy} exemplifies an EABG $\G$ in online retail platforms (e.g., Amazon and eBay) with 5 users $u_1$-$u_5$ in $\U$, 6 products $v_1$-$v_6$ in $\V$, and the user-product interactions in $\EDG$. Each interaction (i.e., edge) is associated with a review (i.e., edge attributes) from the user on the product.

For each node $u_i\in \U$ (resp. $v_j\in \V$), $\EDG_{u_i}$ (resp. $\EDG_{v_j}$) symbolizes the set of edges incident to $u_i$ (resp. $v_j$). We use $\DM_{\U}\in \mathbb{R}^{|\U| \times |\U|}$ (resp. $\DM_{\V}\in \mathbb{R}^{|\V| \times |\V|}$) to represent the diagonal matrix whose diagonal entries correspond to the degrees of nodes in $\U$ (resp. $\V$), e.g., $\DM_{\U}[u_i,u_i]=|\EDG_{u_i}|$ and $\DM_{\V}[v_i,v_i]=|\EDG_{v_i}|$. 
Then, $\DM=\textstyle \begin{pmatrix}
\DM_\U
  & \rvline & 0 \\
\hline
  0 & \rvline &
\DM_\V
\end{pmatrix}\in \mathbb{R}^{(|\U|+|\V|) \times (|\U|+|\V|)}$ is the diagonal node degree matrix of $\G$. 
Further, we denote by $\EM_{\U}\in \mathbb{R}^{|\EDG| \times |\U|}$ and $\EM_{\V}\in \mathbb{R}^{|\EDG| \times |\V|}$ the edge-node indicator matrices for node sets $\U$ and $\V$, respectively. More precisely, for each edge $e_i\in \EDG$ and its two end points $u^{(i)},v^{(i)}$, we have $\EM_{\U}[e_i,u^{(i)}]=\EM_{\V}[e_i,v^{(i)}]=1$. For other nodes $u\in \U \setminus u^{(i)}$ and $v\in \V \setminus v^{(i)}$, $\EM_{\U}[e_i,u]=\EM_{\V}[e_i,v]=0$.

On the basis of $\DM_\U$, $\EM_\U$ and $\DM_\V,\EM_\V$, we define edge-wise transition matrix $\PM_\U$ and $\PM_\V$ as follows:
\begin{equation}\label{eq:PU-PV}
\PM_\U = \EM_{\U} {\DM_{\U}^{-1}} \EM_{\U}^{\top}\ \text{and}\ \PM_\V = \EM_{\V}{\DM_{\V}^{-1}} \EM_{\V}^{\top}.
\end{equation}
Lemma \ref{lem:PUPV} unveils a unique property of $\PM_\U$ and $\PM_\V$, which is crucial to the design of our \algo model.
\begin{lemma}\label{lem:PUPV}
$\PM_\U$ and $\PM_\V$ are doubly stochastic matrices.
\end{lemma}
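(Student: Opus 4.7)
The plan is to verify the three defining conditions of a doubly stochastic matrix (non-negativity, row sums equal to $1$, column sums equal to $1$) for $\PM_\U$; the argument for $\PM_\V$ is identical by symmetry of the construction, so it suffices to write out only one case.

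First I would record two structural identities about the edge--node indicator matrix $\EM_\U$. Since each edge $e_i\in\EDG$ has exactly one endpoint in $\U$, each row of $\EM_\U$ contains exactly one $1$, which means $\EM_\U\mathbf{1}_{|\U|}=\mathbf{1}_{|\EDG|}$. Dually, the $u$-th entry of $\EM_\U^\top\mathbf{1}_{|\EDG|}$ counts the edges incident to $u$, so $\EM_\U^\top\mathbf{1}_{|\EDG|}=\DM_\U\mathbf{1}_{|\U|}$. These two identities, together with non-negativity of $\EM_\U$ and of the diagonal entries of $\DM_\U^{-1}$, immediately give non-negativity of $\PM_\U=\EM_\U\DM_\U^{-1}\EM_\U^\top$.

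Next I would compute the row sums by a single matrix-vector chain:
\begin{equation*}
\PM_\U\mathbf{1}_{|\EDG|}=\EM_\U\DM_\U^{-1}\EM_\U^\top\mathbf{1}_{|\EDG|}=\EM_\U\DM_\U^{-1}\DM_\U\mathbf{1}_{|\U|}=\EM_\U\mathbf{1}_{|\U|}=\mathbf{1}_{|\EDG|},
\end{equation*}
so every row of $\PM_\U$ sums to $1$. For the column sums, I would observe that $\PM_\U$ is symmetric, since $\PM_\U^\top=(\EM_\U\DM_\U^{-1}\EM_\U^\top)^\top=\EM_\U\DM_\U^{-\top}\EM_\U^\top=\PM_\U$ using that $\DM_\U$ is diagonal. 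Hence $\mathbf{1}_{|\EDG|}^\top\PM_\U=(\PM_\U\mathbf{1}_{|\EDG|})^\top=\mathbf{1}_{|\EDG|}^\top$, so column sums are also $1$. The same three-line argument applied to $\EM_\V,\DM_\V$ yields the claim for $\PM_\V$.

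I do not anticipate a real obstacle here; the only subtlety worth flagging in the write-up is the bipartite fact that each edge contributes exactly one endpoint to $\U$ (and one to $\V$), which is what makes the rows of $\EM_\U$ have a single $1$ and thus allows the clean telescoping $\EM_\U\DM_\U^{-1}\EM_\U^\top\mathbf{1}=\mathbf{1}$. This is also the step where the definition of $\DM_\U$ via $\DM_\U[u,u]=|\EDG_u|$ is used in an essential way, so I would make that dependency explicit.
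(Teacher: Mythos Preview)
Your proof is correct and follows essentially the same route as the paper: establish row-stochasticity by direct computation and then invoke symmetry of $\PM_\U$ to get column-stochasticity. The only cosmetic difference is that you carry out the row-sum calculation in matrix--vector form via the identities $\EM_\U\mathbf{1}_{|\U|}=\mathbf{1}_{|\EDG|}$ and $\EM_\U^\top\mathbf{1}_{|\EDG|}=\DM_\U\mathbf{1}_{|\U|}$, whereas the paper sums entries directly; your version is arguably tidier and makes the non-negativity explicit, but the underlying argument is the same.
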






\subsection{Problem Formulation}
We formalize the {\em edge representation learning} (ERL) in EABGs as follows. Given an EABG $\G=(\U\cup \V, \EDG, \XM)$, the task of ERL aims to build a model $f: \EDG \rightarrow \ZM \in \mathbb{R}^{|\EDG|\times z}$ ($z\ll |\EDG|$), which transforms each edge $e_i\in \EDG$ into a length-$z$ vector $\ZM[e_i]$ as its feature representation. Such a feature representation $\ZM[e_i]$ should capture the rich semantics underlying both the bipartite graph structures and edge attributes. In this paper, we focus on the edge classification task, and thus, the edge representations are learned in a semi-supervised fashion by plugging the loss function for classifying edges into the model $f$.
\begin{figure*}[!t]
    \centering
    \includegraphics[width=0.9\textwidth]{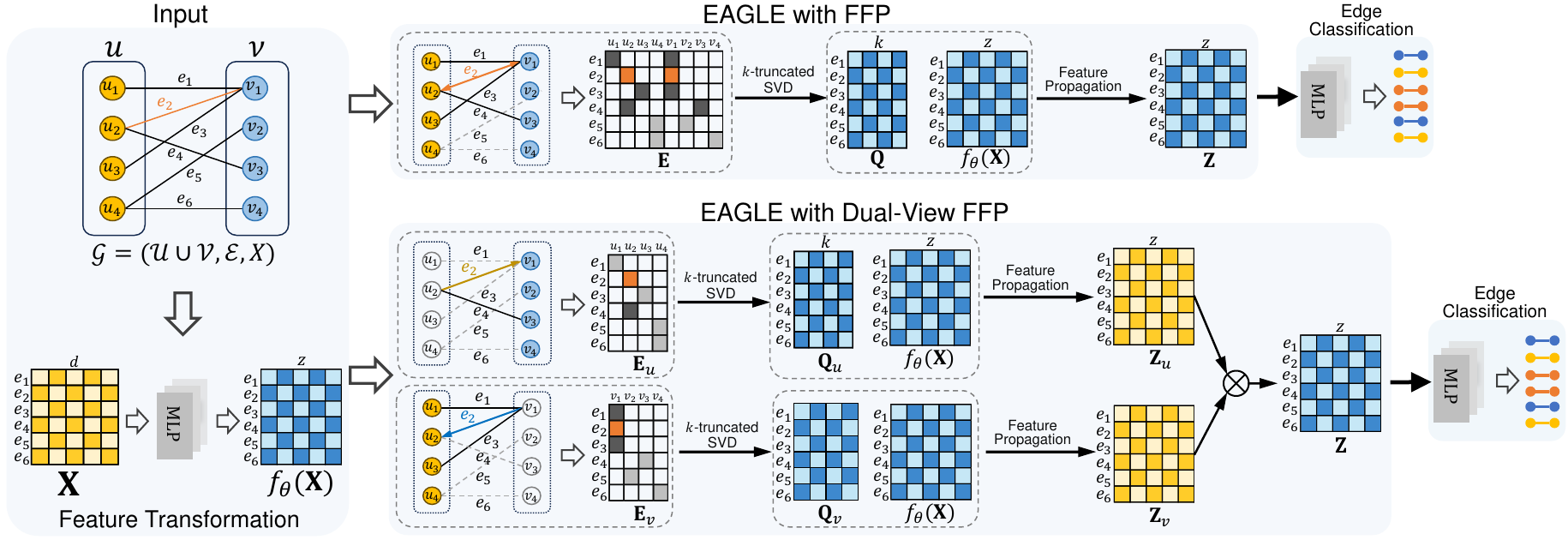}
    \vspace{-3mm}
    \caption{The overall framework of \algo}\label{fig:flow}
    \vspace{-2mm}
\end{figure*}

\section{The \algo Model}\label{sec:solution}
As illustrated in Figure \ref{fig:flow}, we have developed two ERL models for EABG, i.e., \algo with \ffp and dual-view \ffp, both of which involve two key steps: $k$-truncated singular value decomposition (SVD) and feature propagation.

In this section, we focus on introducing our base \algo model, i.e., \algo with \ffp.
Section~\ref{sec:obj} first presents the objective of learning the edge-wise representations, while Section~\ref{sec:sol} then offers an in-depth analysis of the solution to the optimization objective.
In Section~\ref{sec:ffp}, we elaborate on the feature propagation mechanism for computing the edge representations, followed by the loss function for the model training in Section \ref{sec:loss}.

\subsection{Representation Learning Objective}\label{sec:obj}
Inspired by the numeric optimization analysis of generalized graph neural network models in recent studies \cite{ma2021unified,yang2021attributes,zhu2021interpreting}, we formulate the ERL in EABGs as an optimization problem with consideration of the lopsided nature of bipartite graphs.

More concretely, \algo aims at achieving two goals: (i) the edge representations $\ZM$ close to the input edge feature matrix; and (ii) representations of edges that are incident to the same nodes should be similar. 
The former corresponds to a fitting term in the following equation:
\begin{equation}\label{eq:fit}
\OO_f = \|\ZM - f_{\Theta}(\XM)\|^2_F,
\end{equation}
where $f_{\Theta}(\XM)\in \mathbb{R}^{|\EDG|\times z}$ represents a non-linear transformation features of the input edge attribute matrix $\XM$ using an MLP $f_{\Theta}(\cdot)$ parameterized by a learnable weight matrix $\Theta\in \mathbb{R}^{d\times z}$ (including a nonlinear activation function ReLU operation and dropout operation), while the latter is a graph structure-based regularization term $\OO_r$ defined by
\begin{equation}\label{eq:reg}
\begin{split}
\OO_r = & \frac{\beta}{2} \sum_{u\in \U}{\sum_{e_i,e_j\in \EDG_u}}{\frac{1}{|\EDG_u|}\cdot \|\ZM[e_i]-\ZM[e_j]\|^2} \\
& + \frac{1-\beta}{2} \sum_{v\in \V}{\sum_{e_i,e_j\in \EDG_v}}{\frac{1}{|\EDG_v|}\cdot \|\ZM[e_i]-\ZM[e_j]\|^2}.
\end{split}
\end{equation}
Intuitively, Eq. \eqref{eq:reg} forces representations $\ZM[e_i],\ZM[e_j]$ to be close in the Euclidean space if their corresponding edges $e_i,e_j$ are correlated to common nodes. $\frac{1}{|\EDG_u|}$ (resp. $\frac{1}{|\EDG_v|}$) is the weight used to reflect the importance of $e_i,e_j$ from the perspective of common node $u$ (resp. $v$). In particular, we use coefficients $\beta$ and $1-\beta$ to control the importance of edge pairs' shared nodes from $\U$ and $\V$ in constraining the distance between the representations, respectively. 
In sum, the objective of learning $\ZM$ can be formulated as follows:
\begin{equation}\label{eq:obj}
\min_{\ZM\in \mathbb{R}^{|\EDG|\times z}}{(1-\alpha)\cdot \OO_f + \alpha \cdot \OO_r},
\end{equation}
where hyper-parameter $\alpha$ is to balance the above-said two terms.

\subsection{Analysis of the Optimal Solution}\label{sec:sol}
\begin{lemma}\label{lem:closed-form-sol}
The closed-form solution to Eq. \eqref{eq:obj} is
\begin{equation}\label{eq:Z}
\ZM = (1-\alpha)\sum_{t=0}^{\infty}{\alpha^t \PM^{t}} \cdot f_{\Theta}(\XM),
\end{equation}
where $\PM$ is an edge-wise transition matrix defined by
\begin{equation}\label{eq:P}
\PM = \EM\DM^{-1}\EM^{\top},\ \EM=\sqrt{\beta}\cdot \EM_{\U} \mathbin\Vert  \sqrt{1-\beta}\cdot \EM_{\V}.
\end{equation}
\end{lemma}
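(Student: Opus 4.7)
The plan is to convert the minimization problem into a linear system via Laplacian calculus and then expand the resulting resolvent as a Neumann series, with Lemma~\ref{lem:PUPV} guaranteeing convergence. First I would rewrite the regularizer $\OO_r$ in Laplacian quadratic form. The standard identity $\tfrac{1}{2}\sum_{i,j} W[i,j]\,\|\ZM[i]-\ZM[j]\|^2 = \Tr(\ZM^\top(\DM_W - W)\ZM)$ applies separately to the $\U$-part and $\V$-part of $\OO_r$, with weight matrices $\PM_\U = \EM_\U\DM_\U^{-1}\EM_\U^\top$ and $\PM_\V$ respectively. By bipartiteness each edge has exactly one endpoint in $\U$, so $\PM_\U[e_i,e_j]$ equals $1/|\EDG_u|$ when $e_i,e_j$ share their $\U$-endpoint $u$ and is zero otherwise; its row sums are exactly $1$, hence $\DM_W = \IM$ and the $\U$-term collapses to $\beta\Tr(\ZM^\top(\IM - \PM_\U)\ZM)$. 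Doing the same for $\V$ yields $\OO_r = \Tr(\ZM^\top(\IM - (\beta\PM_\U + (1-\beta)\PM_\V))\ZM)$.

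Next I would verify that the operator appearing above is exactly the $\PM$ of \eqref{eq:P}. Writing $\EM = \sqrt{\beta}\EM_\U \mathbin\Vert \sqrt{1-\beta}\EM_\V$ and $\DM$ as the block-diagonal matrix with blocks $\DM_\U,\DM_\V$, a direct block-matrix computation gives $\EM\DM^{-1}\EM^\top = \beta\,\EM_\U\DM_\U^{-1}\EM_\U^\top + (1-\beta)\,\EM_\V\DM_\V^{-1}\EM_\V^\top = \beta\PM_\U + (1-\beta)\PM_\V$. Since $\IM - \PM \succeq 0$ (each summand is PSD by the nonnegativity of the quadratic forms above), the full objective $(1-\alpha)\OO_f + \alpha\OO_r$ is a convex quadratic in $\ZM$. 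Setting its gradient to zero gives $(1-\alpha)(\ZM - f_\Theta(\XM)) + \alpha(\IM - \PM)\ZM = 0$, which rearranges to the linear system $(\IM - \alpha\PM)\ZM = (1-\alpha)f_\Theta(\XM)$.

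Finally I would expand the inverse as a Neumann series. By Lemma~\ref{lem:PUPV}, $\PM_\U$ and $\PM_\V$ are doubly stochastic and symmetric, so their spectral radii are at most $1$; the convex combination $\PM$ therefore satisfies $\rho(\PM) \le 1$. For $\alpha \in (0,1)$ this gives $\rho(\alpha\PM) < 1$, so $\IM - \alpha\PM$ is invertible and $(\IM - \alpha\PM)^{-1} = \sum_{t=0}^\infty \alpha^t \PM^t$. Multiplying through by $(1-\alpha) f_\Theta(\XM)$ yields \eqref{eq:Z}.

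The main obstacle is the bookkeeping in the first step: correctly matching the factors $\tfrac{1}{2},\beta,1-\beta$ in $\OO_r$ against the summation range (ordered vs.\ unordered edge pairs) and verifying that the row sums of $\PM_\U$ and $\PM_\V$ are exactly $1$ so that $\DM_W - W$ simplifies to $\IM - \PM_\U$ with no extra diagonal correction. Once the regularizer is cleanly identified with $\Tr(\ZM^\top(\IM-\PM)\ZM)$, the optimality condition and Neumann series expansion are routine, with convergence handed to us by Lemma~\ref{lem:PUPV}.
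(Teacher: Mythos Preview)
Your proposal is correct and follows essentially the same route as the paper: rewrite $\OO_r$ as $\Tr(\ZM^\top(\IM-\PM)\ZM)$, set the gradient of the convex quadratic to zero to obtain $(\IM-\alpha\PM)\ZM=(1-\alpha)f_\Theta(\XM)$, and expand the resolvent as a Neumann series. You actually supply more detail than the paper does---the Laplacian bookkeeping for $\OO_r$ and the block identity $\EM\DM^{-1}\EM^\top=\beta\PM_\U+(1-\beta)\PM_\V$ (which the paper proves separately as Lemma~\ref{lem:PPP}) are only asserted in the paper's proof, and your explicit convergence justification via $\rho(\alpha\PM)<1$ is likewise omitted there.
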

Lemma \ref{lem:closed-form-sol} offers a simple yet elegant way (i.e., Eq. \eqref{eq:Z}) to calculate the optimal edge representations $\ZM$ to the optimization objective in Eq. \eqref{eq:obj}. However, Eq. \eqref{eq:Z} requires summing up an infinite series of matrix multiplications, which is infeasible in practice, especially for large EABGs. A remedy is to compute an approximate version $\ZM^{\prime}$ by summing up at most $T+1$ terms with a small integer $T$:
\begin{equation}\label{eq:Zprime}
\ZM^{\prime} = (1-\alpha)\sum_{t=0}^{T}{\alpha^t \PM^{t}} \cdot f_{\Theta}(\XM).
\end{equation}
In what follows, we theoretically show that such a truncation is not a favorable choice in EABGs.



\begin{table}
\centering
\caption{{Properties of $\PM$ ($\beta=0.5$)}}\label{tbl:mix}
\vspace{-3mm}
\begin{small}
\begin{tabular}{l|c|c|c|c}
\hline
{\bf Dataset} & {\bf $\sigma_2$ } & {\bf $\sigma_2^2$} & {\bf $\frac{1}{1-\sigma_2^2}$} & {\bf $\frac{1}{1-\alpha \sigma_k^2}$}\\ \hline
{\bf AMiner} & 0.9997 & 0.9994 & 1574.3930 & 1.8207 \\
{\bf OAG} & 0.9999 & 0.9997 & 3780.6051 & 1.8775 \\
\hline
\end{tabular}
\end{small}
\vspace{-2mm}
\end{table}

\begin{lemma}\label{lem:PPP}
Given $\PM$ in Eq. \eqref{eq:P}, $\PM=\beta\cdot \PM_\U+ (1-\beta)\cdot \PM_\V$.
\end{lemma}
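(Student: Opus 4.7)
The plan is to verify the identity by direct block-matrix computation, since every ingredient on the right-hand side is already expressible in terms of the building blocks $\EM_\U,\EM_\V,\DM_\U,\DM_\V$ that make up $\EM$ and $\DM$.

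First I would fix the shapes. Since $\EM_\U\in\mathbb{R}^{|\EDG|\times|\U|}$ and $\EM_\V\in\mathbb{R}^{|\EDG|\times|\V|}$, the symbol $\mathbin\Vert$ in Eq.~\eqref{eq:P} must denote horizontal concatenation, so $\EM\in\mathbb{R}^{|\EDG|\times(|\U|+|\V|)}$ matches the order of $\DM$. I would then record the block-diagonal form
\[
\DM^{-1}=\begin{pmatrix}\DM_\U^{-1} & 0 \\ 0 & \DM_\V^{-1}\end{pmatrix},
\]
which is the key fact that makes the cross-terms vanish.

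Next I would perform the multiplication $\EM\DM^{-1}\EM^{\top}$ block by block. Writing $\EM=\bigl[\sqrt{\beta}\,\EM_\U \,\big|\, \sqrt{1-\beta}\,\EM_\V\bigr]$ and $\EM^{\top}$ as the corresponding vertical stack, standard block-matrix multiplication gives
\[
\EM\DM^{-1}\EM^{\top}=\beta\,\EM_\U\DM_\U^{-1}\EM_\U^{\top}+(1-\beta)\,\EM_\V\DM_\V^{-1}\EM_\V^{\top},
\]
with no cross terms because of the block-diagonal structure of $\DM^{-1}$. Substituting the definitions $\PM_\U=\EM_\U\DM_\U^{-1}\EM_\U^{\top}$ and $\PM_\V=\EM_\V\DM_\V^{-1}\EM_\V^{\top}$ from Eq.~\eqref{eq:PU-PV} yields the claimed identity $\PM=\beta\PM_\U+(1-\beta)\PM_\V$.

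There is no real obstacle here; the argument is a one-line block computation. The only conceptual point worth flagging explicitly is that the weights $\sqrt{\beta}$ and $\sqrt{1-\beta}$ were placed inside $\EM$ precisely so that, upon forming $\EM\DM^{-1}\EM^{\top}$, they combine to produce the convex combination coefficients $\beta$ and $1-\beta$ on the two transition matrices. This also makes clear why the construction in Eq.~\eqref{eq:P} is the natural one to encode the weighted regularizer in Eq.~\eqref{eq:reg}.
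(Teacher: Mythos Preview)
Your proposal is correct and follows essentially the same approach as the paper: both expand $\PM=\EM\DM^{-1}\EM^{\top}$ using the block-diagonal structure of $\DM^{-1}$ and the horizontal concatenation in $\EM$, observe that the cross terms vanish, and identify the two surviving summands with $\beta\PM_\U$ and $(1-\beta)\PM_\V$. The paper inserts the minor intermediate factorization $\DM^{-1}=\DM^{-1/2}\DM^{-1/2}$ before collapsing the blocks, but this is cosmetic and your more direct computation is equally valid.
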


First, by Lemma \ref{lem:PPP}, $\PM$ is a linear combination of $\PM_\U$ and $\PM_\V$. Recall that both $\PM_\U$ and $\PM_\V$ are non-negative doubly stochastic, which further connotes that $\PM$ is {\em non-negative doubly stochastic} and can be regarded as a reversible Markov chain. Let $T_{mix}$ be its mixing time.
Using its doubly stochastic property and the Convergence Theorem in \cite{levin2017markov,motwani1995randomized}, when $t>T_{mix}$, $\PM^t f_{\Theta}(\XM)$ converges to a stationary distribution $\boldsymbol{\Pi}$, wherein $\boldsymbol{\Pi}[e_i]$ is a constant vector, i.e., $\mathbf{1}\cdot \|f_{\Theta}(\XM)[e_i]\|_1$.
Thus, $\ZM$ in Eq. \eqref{eq:Z} can be broken down into two parts\footnote{For the interest of space, we defer all proofs to Appendix \ref{sec:proof}.}:
\begin{small}
\begin{equation}\label{eq:Z-2-parts}
\left((1-\alpha)\sum_{t=0}^{T_{mix}-1}{\alpha^t \PM^{t}} \cdot f_{\Theta}(\XM) \right) + \alpha^{T_{mix}}\boldsymbol{\Pi}.
\end{equation}
\end{small}
Intuitively, since each row in $\boldsymbol{\Pi}$ is a constant vector, $\alpha^{T_{mix}}\boldsymbol{\Pi}$ is not an informative representation matrix.
As such, Eq. \eqref{eq:Z-2-parts} implies that if we pick a large $T$ ($T\gg T_{mix}$) for $\ZM^{\prime}$, constant vectors $\alpha^{T_{mix}}\boldsymbol{\Pi}$ might jeopardize the representation quality of $\ZM^{\prime}$ in Eq. \eqref{eq:Zprime}, especially on graphs with small mixing times, resulting in degraded performance. On the other hand, a small $T$ ($T\ll T_{mix}-1$) for $\ZM^{\prime}$ fails to enable an adequate preservation of the topological semantics underlying the input EABGs.
\begin{lemma}\label{lem:mix}
Let $\sigma_2$ be the second largest singular value of $\EM\DM^{-1/2}$ (defined in Eq. \eqref{eq:P}), $T_{mix} \ge \frac{1}{1-\sigma_2^2}-1$.
\end{lemma}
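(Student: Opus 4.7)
The plan is to reduce the mixing-time bound to a spectral statement about $\PM$ and then invoke the classical lower bound on mixing time for reversible Markov chains. First I would observe that, by construction in Eq.~\eqref{eq:P}, $\PM = \MM\MM^{\top}$ for $\MM := \EM\DM^{-1/2}$. This factorization has two immediate consequences: $\PM$ is positive semidefinite, and its eigenvalues are exactly the squared singular values $\sigma_1^2 \ge \sigma_2^2 \ge \cdots$ of $\MM$ (via the standard SVD/eigendecomposition correspondence between $\MM\MM^{\top}$ and the singular values of $\MM$).

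Next I would identify the relevant spectral gap. By Lemma~\ref{lem:PUPV} together with Lemma~\ref{lem:PPP}, $\PM$ is doubly stochastic, so its largest eigenvalue equals $1$; hence $\sigma_1 = 1$ and the second largest eigenvalue of $\PM$ is $\sigma_2^2$. Crucially, since $\PM$ is PSD every eigenvalue is nonnegative, so the quantity $\lambda_{\star} := \max\{|\lambda_2(\PM)|,|\lambda_{\min}(\PM)|\}$ that governs the spectral gap of a reversible chain coincides with $\sigma_2^2$. The relaxation time is therefore $t_{\mathrm{rel}} = 1/(1-\sigma_2^2)$.

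Finally I would apply the classical spectral-gap lower bound for mixing times of reversible Markov chains (cf.\ \cite{levin2017markov}), which states that $T_{mix} \ge t_{\mathrm{rel}} - 1$. Substituting $t_{\mathrm{rel}} = 1/(1-\sigma_2^2)$ yields the claimed bound $T_{mix} \ge \tfrac{1}{1-\sigma_2^2} - 1$.

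The step I expect to require the most care is the last one: matching the paper's (unsubscripted) notion of $T_{mix}$ to the precise form in which the textbook inequality $T_{mix} \ge t_{\mathrm{rel}}-1$ holds cleanly, as opposed to the more common $\epsilon$-dependent variant $T_{mix}(\epsilon) \ge (t_{\mathrm{rel}}-1)\log(1/(2\epsilon))$. Once the convention is fixed, the underlying linear-algebraic content (SVD of $\MM$ giving eigenvalues of $\PM$, PSD-ness forcing $\lambda_\star = \sigma_2^2$, reversibility following from Lemmas~\ref{lem:PUPV}--\ref{lem:PPP}) is routine and slots directly into the standard proof.
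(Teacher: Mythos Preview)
Your proposal is correct and follows essentially the same route as the paper: identify $\PM=\MM\MM^{\top}$ so that the eigenvalues of $\PM$ are the squared singular values of $\MM=\EM\DM^{-1/2}$, conclude that the second largest eigenvalue of the doubly stochastic $\PM$ is $\sigma_2^2$, and then invoke Theorem~12.5 of \cite{levin2017markov}. Your treatment is in fact slightly more careful than the paper's, since you make explicit the PSD observation guaranteeing $\lambda_\star=\sigma_2^2$ and flag the $\epsilon$-convention issue for $T_{mix}$, neither of which the paper spells out.
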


Lemma \ref{lem:mix} provides a lower bound for $T_{mix}$, which is proportional to the inverse of $1-\sigma_2^2$. As per the empirical data on real EABGs (see Section \ref{sec:datasets}) from Table \ref{tbl:mix}, $\sigma_2$ is notably approaching 1, rendering $T_{mix}$ extremely large (over thousands), as a consequence of the unique characteristics of bipartite graph structures. We can conclude that the $\alpha^{T_{mix}}\boldsymbol{\Pi}$ part in $\ZM$ (Eq. \eqref{eq:Z-2-parts}) is insignificant. Additionally, based on Section 12.2 in \cite{levin2017markov}, given any integer $t$ and any edge $e_i\in \EDG$,
\begin{equation*}
Var_{\boldsymbol{\Pi}[e_i]}(\PM^tf_{\Theta}(\XM)[e_i]) \le \sigma^{4t}_2\cdot Var_{\boldsymbol{\Pi}[e_i]}(f_{\Theta}(\XM)[e_i]),
\end{equation*}
where $Var_{\boldsymbol{\Pi}[e_i]}$ stands for the variance computed w.r.t. the stationary distribution $\boldsymbol{\Pi}[e_i]$. Since $\sigma_2$ is almost 1, the above equation manifests that even for a very large $t$, the difference between $\PM^t f_{\Theta}(\XM)[e_i]$ and the stationary distribution $\boldsymbol{\Pi}[e_i]$ can be as significant as that of the input feature vector $f_{\Theta}(\XM)[e_i]$. That is to say, $\PM^t f_{\Theta}(\XM)[e_i]$ with large $t$ still encompasses rich and informative features, and thus, computing $\ZM$ via Eq. \eqref{eq:Zprime} leads to compromised representation quality.

\subsection{Factorized Feature Propagation}\label{sec:ffp}
However, it remains tenaciously challenging to calculate the edge representations $\ZM$ by Eq. \eqref{eq:Z}. To tackle this issue, we resort to a dimensionality reduction approach, dubbed as {\em factorized feature propagation} (FFP). The rudimentary idea behind \ffp is to construct an $|\EDG|\times k$ ($k\ll |\EDG|$) matrix $\QM$ such that $\QM\cdot\QM^{\top}\approx (1-\alpha)\sum_{t=0}^{\infty}{\alpha^t \PM^{t}}$ without explicitly materializing $(1-\alpha)\sum_{t=0}^{\infty}{\alpha^t \PM^{t}}$. As such, edge representations $\ZM$ in Eq. \eqref{eq:Z} can be approximated via
\begin{equation}\label{eq:Z-Q}
\ZM = \QM\cdot (\QM^{\top}f_{\Theta}(\XM)),
\end{equation}
which can be done in $O(|\EDG|\cdot kz)$ time.
To realize the above idea, \ffp first conducts a $k$-truncated SVD over $\EM\DM^{-\frac{1}{2}}$ to get its left singular vectors $\UM$ and the diagonal matrix $\SVM$ containing singular values. Then, we construct $\QM$ as
\begin{equation}\label{eq:Q}
\QM=\UM\cdot \sqrt{\frac{1}{1-\alpha\SVM^2}}.
\end{equation}

The underlying rationale is on the basis of $\PM=\EM\DM^{-\frac{1}{2}}\cdot (\EM\DM^{-\frac{1}{2}})^{\top}\approx \UM\SVM^2\UM^{\top}$. The result in Eq. \eqref{eq:Q} is a direct inference after proving all singular values of $\EM\DM^{-\frac{1}{2}}$ not greater than 1 in Lemma \ref{lem:sigma}:
\begin{equation*}
(1-\alpha)\sum_{t=0}^{\infty}{\alpha^t \PM^{t}} \approx \UM\cdot \sum_{t=0}^{\infty}{\alpha^{t}\cdot \SVM^{2t}} \cdot \UM^{\top}=\UM\cdot \frac{1}{1-\alpha\SVM^2} \cdot \UM^{\top}.
\end{equation*}




\header
{\bf Correctness Analysis.} Theorem \ref{lem:QQS} establishes the approximation accuracy guarantees of $\QM$. In practice, we usually set $k=256$, and thus, the total error sum $\frac{1}{1-\alpha \sigma_k^2}$ is roughly 2, as reported in Table \ref{tbl:mix}.
\begin{theorem}\label{lem:QQS}
Let $\QM$ be the $|\EDG|\times k$ matrix defined in Eq. \eqref{eq:Q} and $\sigma_k$ be the $k$-th largest singular value of $\EM\DM^{-\frac{1}{2}}$. Then, the following inequality holds
\begin{equation*}
\left\| \QM\QM^{\top} - (1-\alpha)\sum_{t=0}^{\infty}{\alpha^t \PM^{t}} \right\|_F \le \frac{1}{1-\alpha \sigma_k^2}.
\end{equation*}
In particular, when $k=|\EDG|$,
\begin{equation*}
\QM\QM^{\top} = (1-\alpha)\sum_{t=0}^{\infty}{\alpha^t \PM^{t}}.
\end{equation*}
\end{theorem}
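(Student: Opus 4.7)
The plan is to diagonalize both sides in the orthonormal basis supplied by the SVD of $\EM\DM^{-1/2}$, identify the gap as a tail-truncation residual, and bound its norm by the size of the largest discarded singular value.

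First, I extend the rank-$k$ truncation $\UM,\SVM$ to the full SVD $\EM\DM^{-1/2}=\widetilde{\UM}\widetilde{\SVM}\widetilde{\VM}^{\top}$, writing $\widetilde{\UM}=[\UM\mid\UM_{\perp}]$ with orthonormal columns and $\widetilde{\SVM}=\operatorname{diag}(\sigma_{1},\ldots,\sigma_{|\EDG|})$ in non-increasing order. Since $\PM=(\EM\DM^{-1/2})(\EM\DM^{-1/2})^{\top}=\widetilde{\UM}\widetilde{\SVM}^{2}\widetilde{\UM}^{\top}$ and $\widetilde{\UM}$ has orthonormal columns, every power of $\PM$ shares the same basis, $\PM^{t}=\widetilde{\UM}\widetilde{\SVM}^{2t}\widetilde{\UM}^{\top}$. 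Because Lemma~\ref{lem:sigma} guarantees $\sigma_{i}\le 1$, we have $\alpha\sigma_{i}^{2}<1$ and the geometric series converges diagonally, yielding the closed form $\sum_{t=0}^{\infty}\alpha^{t}\PM^{t}=\widetilde{\UM}(\IM-\alpha\widetilde{\SVM}^{2})^{-1}\widetilde{\UM}^{\top}$.

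Second, the definition in Eq.~\eqref{eq:Q} shows $\QM\QM^{\top}=\UM(\IM_{k}-\alpha\SVM^{2})^{-1}\UM^{\top}$, which is precisely the block of $\widetilde{\UM}(\IM-\alpha\widetilde{\SVM}^{2})^{-1}\widetilde{\UM}^{\top}$ supported on $\UM$. After reconciling the $(1-\alpha)$ prefactor, the difference $\QM\QM^{\top}-(1-\alpha)\sum_{t\ge 0}\alpha^{t}\PM^{t}$ decomposes into a residual living in the orthogonal complement, of the form $\UM_{\perp}\operatorname{diag}\bigl(\tfrac{1}{1-\alpha\sigma_{i}^{2}}\bigr)_{i>k}\UM_{\perp}^{\top}$ (up to scaling). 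Since $\UM_{\perp}$ has orthonormal columns, the singular values of the residual equal its diagonal weights, and the monotonicity $\sigma_{i}\le\sigma_{k}$ for $i>k$ bounds each of them by $\tfrac{1}{1-\alpha\sigma_{k}^{2}}$. The $k=|\EDG|$ case is then immediate: $\UM_{\perp}$ is empty, and both sides coincide exactly with $\widetilde{\UM}(\IM-\alpha\widetilde{\SVM}^{2})^{-1}\widetilde{\UM}^{\top}$.

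The main obstacle is the bookkeeping around the $(1-\alpha)$ prefactor in the second step: $\QM$ as defined in Eq.~\eqref{eq:Q} does not carry a $\sqrt{1-\alpha}$ factor, so matching $\QM\QM^{\top}$ against $(1-\alpha)\sum_{t\ge 0}\alpha^{t}\PM^{t}$ requires carefully splitting the full diagonal operator into its top-$k$ and tail blocks and tracking the extra contribution that arises on the $\UM$-block from the mismatch. A secondary technicality is how the stated Frobenius bound is tightened to $\tfrac{1}{1-\alpha\sigma_{k}^{2}}$: a naive entrywise sum over all $|\EDG|-k$ tail directions would aggregate to $\sqrt{\sum_{i>k}(1-\alpha\sigma_{i}^{2})^{-2}}$, so the argument has to route through a per-direction spectral bound on the residual and exploit the rapid decay of $\sigma_{i}^{2}$ past the top-$k$ spectrum, consistent with the numerical values reported in Table~\ref{tbl:mix}.
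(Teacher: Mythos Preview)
Your approach is essentially the paper's: diagonalize in the left singular basis of $\EM\DM^{-1/2}$, recognize $\QM\QM^{\top}$ as the top-$k$ spectral truncation of the target series, and bound the tail. The only cosmetic difference is that the paper packages the last step as an invocation of the Eckart--Young theorem rather than writing out the $\UM_{\perp}$-block directly.

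Both ``obstacles'' you raise are not gaps in your reasoning; they are genuine discrepancies that the paper does not actually resolve. On the $(1-\alpha)$ prefactor: you are right that $\QM$ in Eq.~\eqref{eq:Q} carries no $\sqrt{1-\alpha}$, so $\QM\QM^{\top}=\UM(\IM_k-\alpha\SVM^{2})^{-1}\UM^{\top}$ matches $\sum_{t\ge 0}\alpha^{t}\PM^{t}$ rather than $(1-\alpha)$ times it. The paper's proof simply inserts the factor $(1-\alpha)$ between two consecutive displayed equalities without justification, so there is nothing for you to reconcile. On the Frobenius bound: the paper states Eckart--Young as $\|\MM-\MM_{k}\|_{F}=\sigma_{k+1}$, which is the spectral-norm identity, not the Frobenius one. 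A correct Frobenius application yields exactly the quantity you wrote down, $\bigl(\sum_{i>k}(1-\alpha\sigma_{i}^{2})^{-2}\bigr)^{1/2}$. The single-term bound $\tfrac{1}{1-\alpha\sigma_{k}^{2}}$ in the theorem is obtained by applying the spectral-norm form under a Frobenius-norm label; you should not search for a ``rapid decay'' argument to recover it, because none is given.
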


\header
{\bf Complexity Analysis.} As remarked earlier, Eq. \eqref{eq:Z-Q} consumes $O(|\EDG|\cdot kz)$ time. According to \cite{halko2011finding}, the $k$-truncated SVD of sparse matrix $\EM\DM^{-\frac{1}{2}}$ (comprising $2|\EDG|$ elements) takes $O(|\EDG|\cdot k^2+k^3)$ time when the randomized algorithm is employed. Overall, the total computational cost incurred by \ffp is bounded by $O(|\EDG| k\cdot (k+z))$.





\subsection{Model Training}\label{sec:loss}
In this work, we mainly focus on the semi-supervised edge classification task. The edge representations $\ZM$ output by \ffp are subsequently fed into an MLP network $f_{\Omega}(\cdot)$ to yield the edge classification result:
\begin{equation}\label{eq:f-omega}
\YM = \text{sigmoid}(f_{\Omega}(\ZM)) \in \mathbb{R}^{|\EDG|\times |C|}
\end{equation}
where $|C|$ is the number of classes and $f_{\Omega}$ is parameterized by a learnable weight matrix $\Omega\in \mathbb{R}^{k \times |C|}$, followed by a nonlinear activation function ReLU operation and a dropout operation. In sum, the trainable parameters of \algo are only the weight matrix $\Theta\in \mathbb{R}^{d\times z}$ in the feature transformation layer $f_{\Theta}(\XM)$ and the weight matrix $\Omega \in \mathbb{R}^{k \times |C|}$ of the output layer in Eq. \eqref{eq:f-omega}.

Following common practice, we employ the cross-entropy loss with ground-truth edge labels to guide the model training:
\begin{align*}
\mathcal{L}=-\frac{1}{|\EDG_L|}\sum_{e_i\in \EDG_L}{\sum_{j\in C}} & \widehat{\YM}[e_i,j]\cdot \log{(\YM[e_i,j])} \\
+ & (1-\widehat{\YM}[e_i,j])\cdot \log{\left(1-\YM[e_i,j]\right)} ,
\end{align*}
where $\EDG_L$ denotes the set of labeled edges, $\widehat{\YM}$ consists of the ground-truth labels of edges ($\widehat{\YM}[e_i,j]=1$ if $e_i$ belongs to class $C_j$ and 0 otherwise), and $\YM[e_i,j]$ stands for the predicted probability of edge $e_i$ belonging to class $C_j$.

\section{\algo with Dual-View \ffp}\label{sec:solution-opt}
Recall that in Section \ref{sec:sol}, the edge-wise transition matrix $\PM$ can be equivalently converted into $\PM=\beta\cdot \PM_\U+ (1-\beta)\cdot \PM_\V$ (Lemma \ref{lem:PPP}). In turn, the edge representations $\ZM$ in Eq. \eqref{eq:Z} are essentially obtained through a linear combination of the features propagated between edges via their connections using two heterogeneous node sets $\U$ and $\V$ as intermediaries, which tends to yield sub-optimal representation effectiveness. 
Further, such a linear combination relies on a manually selected parameter $\beta$ to balance the importance of features w.r.t. these two views, which requires re-calculating the $k$-truncated SVD of $\EM\DM^{-\frac{1}{2}}$ (see Eq. \eqref{eq:P}) from scratch to create $\QM$ (Eq. \eqref{eq:Q}) once $\beta$ was changed, leading to significant computation effort.

To mitigate the foregoing issues, in \algo, we develop {\em dual-view factorized feature propagation} (referred to as \dvffp) for learning enhanced edge representations. The basic idea is to create two intermediate edge representations, $\ZM_\U$ and $\ZM_\V$, by utilizing the associations between edges from the views of $\U$ and $\V$ severally, and then coalesce them into the final edge representations $\ZM$.

In the sequel, Section \ref{sec:dv-ffp} elaborates on the details of \dvffp, followed by a theoretical analysis in Section \ref{sec:dv-ffp-als}

\subsection{Dual-View Factorized Feature Propagation}\label{sec:dv-ffp}






Akin to Eq. \eqref{eq:Z}, the goal of \dvffp is to generate edge representations $\ZM_\U$ and $\ZM_\V$ from the $\U$-wise and $\V$-wise views as follows:
\begin{equation}\label{eq:ZUZV}
\begin{split}
\ZM_\U=(1-\alpha)\sum_{t=0}^{\infty}{\alpha^{t}\PM^{t}_\U} \cdot f_{\Theta_\U}(\XM),\\ \ZM_\V=(1-\alpha)\sum_{t=0}^{\infty}{\alpha^{t} \PM^{t}_\V} \cdot f_{\Theta_\V}(\XM).
\end{split}
\end{equation}
In Eq. \eqref{eq:ZUZV}, $f_{\Theta_\U}(\XM)$ (resp. $f_{\Theta_\V}(\XM)$) corresponds to the initial edge features used for the generation of $\ZM_\U$ (resp. $\ZM_\V$), which is transformed from the input edge attribute vectors $\XM$ through an MLP network parameterized by weight matrix ${\Theta_\U}$ (resp. ${\Theta_\V}$).

In analogy to \ffp in Section \ref{sec:ffp}, \dvffp adopts a low-dimensional matrix approximation trick to approximate $(1-\alpha)\sum_{t=0}^{\infty}{\alpha^{t}\PM^{t}_\U}$ and $(1-\alpha)\sum_{t=0}^{\infty}{\alpha^{t}\PM^{t}_\V}$, while sidestepping the explicit construction of these two $|\EDG|\times |\EDG|$ dense matrices. Specifically, \dvffp first applies a $k$-truncated SVD over $\EM_\U\DM^{-\frac{1}{2}}_\U$ and $\EM_\U\DM^{-\frac{1}{2}}_\V$, respectively, to get the left singular vectors $\UM_\U$, singular values $\SVM_\U$, and their counterparts $\UM_\V$ and $\SVM_\V$. 
Let $\QM_\U=\UM_\U \sqrt{\frac{1}{1-\alpha\SVM^2_\U}}$ and $\QM_\V=\UM_\V \sqrt{\frac{1}{1-\alpha\SVM^2_\V}}$. Then, the $\U$-wise and $\V$-wise edge representations $\ZM_\U$ and $\ZM_\V$ can be computed by
\begin{equation}\label{eq:ZU-ZV-comp}
\ZM_\U = \QM_\U\cdot \left(\QM^{\top}_\U f_{\Theta_\U}(\XM)\right)\ \text{and}\ \ZM_\V = \QM_\V\cdot \left(\QM^{\top}_\V f_{\Theta_\V}(\XM)\right),
\end{equation}
respectively.
Afterwards, they are combined as the final edge representations $\ZM$ through
\begin{equation}\label{eq:Z-combine}
\ZM = f_{\text{combine}}(\gamma\cdot\ZM_\U, (1-\gamma)\cdot\ZM_\V),
\end{equation}
where $f_{\text{combine}}(\cdot,\cdot)$ is a combinator function, which can be a summation operator $+$, matrix concatenation operator $\mathbin\Vert$, or max operator, and $\gamma\in [0,1]$ is a hyper-parameter.





\subsection{Analysis}\label{sec:dv-ffp-als}
In the rest of this section, we theoretically analyze the optimization objective of learning $\ZM_\U$, $\ZM_\V$ as in Eq. \eqref{eq:ZUZV}, the approximation accuracy guarantees of $\QM_\U$ and $\QM_\V$, as well as the computational expense of \dvffp, respectively.

\header
{\bf Optimization Objective.}
Recall that $\PM=\beta\cdot \PM_\U+ (1-\beta)\cdot \PM_\V$ by Lemma \ref{lem:PPP}. If we set $\beta$ to $1$ and $0$, $\PM$ in Eq. \eqref{eq:P} turns into $\PM_\U$ and $\PM_\V$, respectively. Accordingly, $\ZM$ defined in Eq. \eqref{eq:Z} becomes $\ZM_\U$ and $\ZM_\V$, if we replace $f_\Theta(\XM)$ by $f_{\Theta_\U}(\XM)$ and $f_{\Theta_\V}(\XM)$, respectively. Since Lemma \ref{lem:closed-form-sol} indicates that $\ZM$ in Eq. \eqref{eq:Z} is the closed solution to the objective in Eq. \eqref{eq:obj}, when $\beta=1$ or $\beta=0$, $\ZM_\U$ and $\ZM_\V$ defined in Eq. \eqref{eq:ZUZV} are thus the closed form solutions to the problems that minimize the following objectives:
\begin{small}
\begin{equation*}
\begin{split}
(1-\alpha) \|\ZM_\U - f_{\Theta_\U}(\XM)\|^2_F + \frac{\alpha}{2} \sum_{u\in \U}{\sum_{e_i,e_j\in \EDG_u}}{\frac{1}{|\EDG_u|}\cdot \|\ZM_\U[e_i]-\ZM_\U[e_j]\|^2},\\
(1-\alpha) \|\ZM_\V - f_{\Theta_\V}(\XM)\|^2_F + \frac{\alpha}{2} \sum_{v\in \V}{\sum_{e_i,e_j\in \EDG_v}}{\frac{1}{|\EDG_v|}\cdot \|\ZM_\V[e_i]-\ZM_\V[e_j]\|^2},
\end{split}
\end{equation*}
\end{small}
respectively.

\header
{\bf Correctness.}
Since when we set $\beta=1$ (resp. $\beta=0$), $\PM$ and $\EM\DM^{-1/2}$ turn into $\PM_\U$ and $\EM_\U\DM^{-1/2}_\U$ (resp. $\PM_\V$ and $\EM_\V\DM^{-1/2}_\V$).
Let $\sigma_k(\U)$ and $\sigma_k(\V)$ be the $k$-th largest singular value of $\EM_\U\DM_\U^{-{1}/{2}}$ and $\EM_\V\DM_\V^{-{1}/{2}}$, respectively. Based on Theorem \ref{lem:QQS}, we can derive the following inequalities
\begin{small}
\begin{equation*}
\begin{split}
\left\| \QM_\U\QM_\U^{\top} - (1-\alpha)\sum_{t=0}^{\infty}{\alpha^t \PM^{t}_\U} \right\|_F \le \frac{1}{1-\alpha \sigma_k^2(\U)},\\
\left\| \QM_\V\QM_\V^{\top} - (1-\alpha)\sum_{t=0}^{\infty}{\alpha^t \PM^{t}_\V} \right\|_F \le \frac{1}{1-\alpha \sigma_k^2(\V)}.
\end{split}
\end{equation*}
\end{small}
In particular, when $k=|\EDG|$,
\begin{equation*}
\QM_\U\QM^{\top}_\U = (1-\alpha)\sum_{t=0}^{\infty}{\alpha^t \PM^{t}_\U}\ \text{and}\ \QM_\V\QM^{\top}_\V = (1-\alpha)\sum_{t=0}^{\infty}{\alpha^t \PM^{t}_\V}.
\end{equation*}

\header
{\bf Complexity.}
The computations of $\ZM_\U$ and $\ZM_\V$ in Eq. \eqref{eq:ZU-ZV-comp} need $O(|\EDG|kz)$ time, respectively. 
The randomized $k$-truncated SVD \cite{halko2011finding} of sparse matrices $\EM_\U\DM_\U^{-\frac{1}{2}}$ and $\EM_\V\DM_\V^{-\frac{1}{2}}$ requires $O(|\EDG|\cdot k^2+k^3)$ time. Therefore, \dvffp and \ffp have the same time complexity $O(|\EDG| k\cdot (k+z))$.

\section{Experiments}\label{sec:experiments}

\begin{table}
\centering
\caption{Edge-Attributed Bipartite Networks}\label{tbl:datasets}
\vspace{-2mm}
\begin{tabular}{l|c|c|c|c|c}
\hline
{\bf Name} & {\bf $|\EDG|$} & {\bf $|\U|$} & {\bf $|\V|$} & {\bf $d$} &  {\bf $|C|$} \\ \hline
Amazon &  359,425 & 25,939 & 14,061 & 768 & 3  \\
AMiner &  54,465 & 39,358 & 641 & 768 & 10  \\
DBLP &  243,960 & 33,503 & 6497 & 768 & 10  \\
Google &  564,831 & 32,788 & 7212 & 768 & 3  \\
MAG &  50,443  & 38,990 & 1,010 & 768 & 10  \\ 
 \hline
 \end{tabular}
 \end{table}

\begin{table*}[!t]
\centering
\caption{Classification Performance (the higher the better).}\label{tbl:classification-perf}
\vspace{-2ex}
\begin{tabular}{|c|cc|cc|cc|cc|cc|} 
\hline
\multirow{2}{*}{\bf Method} & \multicolumn{2}{c|}{\bf Amazon} & \multicolumn{2}{c|}{\bf AMiner} & \multicolumn{2}{c|}{\bf DBLP} & \multicolumn{2}{c|}{\bf Google} &  \multicolumn{2}{c|}{\bf MAG}  \\ 
\cline{2-11}
                        & AP     & AUC                & AP     & AUC                & AP     & AUC              & AP     & AUC                & AP     & AUC                           \\ 
\hline
GCN                     & 0.6515 & 0.8691             & 0.0966 & 0.9254             & 0.5835 & 0.8979           & 0.5396 & 0.8122                       & 0.7504 & 0.9617           \\
GraphSAGE               & 0.6927 & 0.8874             & 0.1398 & 0.9485             & \underline{0.6785} & \underline{0.9254}           & \underline{0.5789} & \cellcolor{gray!20}\underline{0.8250}                       & \underline{0.7998} & \underline{0.9702}           \\
SGC                     & 0.5721 & 0.8203             & 0.0360 & 0.8468             & 0.4753 & 0.8576           & 0.4728 & 0.7563                       & 0.6838 & 0.9470           \\
DGI                     & 0.3879 & 0.6094             & 0.0046 & 0.5024             & 0.2785 & 0.6813           & 0.3465 & 0.5336                        & 0.4278 & 0.8315           \\
GAT                     & 0.6809 & 0.8810             & 0.1197 & 0.9077             & 0.5936 & 0.8988           & 0.5040 & 0.6928                        & 0.7517 & 0.9614           \\
GATv2                   & 0.6871 & 0.8860             & 0.1356 & 0.9206             & 0.6462 & 0.9156           & 0.5313 & 0.7804                        & 0.7673 & 0.9635           \\
FC                      &  \underline{0.7030} &  \underline{0.8905}             & \underline{0.1877} & \underline{0.9607}             & 0.6234 & 0.8922           & 0.5585 & 0.7890                       & 0.7401 & 0.9564           \\

GEBE                    & 0.4751 & 0.7158             & 0.1013 & 0.8739             & 0.4164 & 0.8116           & 0.3956 & 0.6508                       & 0.6555 & 0.9317           \\
AttrE2Vec               & 0.3334 & 0.4991             & 0.0080 & 0.5966             & 0.1716 & 0.5390           & 0.3331 & 0.4976                        & 0.1480 & 0.5463           \\ 
\hline
\algo (\ffp)        &  0.6946 & 0.8875 & \cellcolor{gray!45}0.5209 & \cellcolor{gray!45}0.9754 & 0.7069 & 0.9325 & 0.5787 & 0.8151 & 0.9047 & \cellcolor{gray!20}0.9869           \\
\algo (\dvffp)-sum     & 0.7059 & 0.8941 & 0.5062 & \cellcolor{gray!20}0.9684 & 0.7160 & 0.9319 & 0.5886 & 0.8202 & 0.9083 & 0.9867 \\
\algo (\dvffp)-max     & \cellcolor{gray!45}0.7093 & \cellcolor{gray!45}0.8965 & 0.3740 & 0.9589 & \cellcolor{gray!45}0.7267 & \cellcolor{gray!45}0.9361 & \cellcolor{gray!45}0.5968 & \cellcolor{gray!45}0.8287 & \cellcolor{gray!45}0.9195 & \cellcolor{gray!45}0.9888 \\
\algo (\dvffp)-concat     & \cellcolor{gray!20}0.7064 & \cellcolor{gray!20}0.8944 & \cellcolor{gray!20}0.5081 & 0.9660 & \cellcolor{gray!20}0.7187 & \cellcolor{gray!20}0.9327 & 0.5931 & 0.8248 & \cellcolor{gray!20}0.9128 & \cellcolor{gray!20}0.9869 \\
\hline
\end{tabular}
\end{table*}

In this section, we empirically study the effectiveness of our proposed \algo models on real-world datasets in terms of edge classification. All experiments are conducted on a Linux machine powered by 4 AMD EPYC 7313 CPUs with 500GB RAM, and 1 NVIDIA RTX A5000 GPU with 24GB memory. The code and all datasets are available at \url{https://github.com/wanghewen/EAGLE} for reproducibility.

\subsection{Baselines and Hyperparameters}

We compare our proposed solutions against 9 competitors in terms of edge classification accuracy. The first category of baseline models consists of node-wise representation learning methods, including GCN~\cite{Kipf2016}, GraphSAGE~\cite{Hamilton2017}, SGC~\cite{Wu2019}, DGI~\cite{velivckovic2018deep}, GAT~\cite{Velickovivelickovic}, and GATv2~\cite{brody2022how}. We initialize the embeddings of edge endpoints as the mean average of their connected edge attributes. Then, we apply these node-wise representation learning methods to update the node embeddings for the edge endpoints. Finally, we concatenate the embeddings of edge endpoints along with edge attributes to generate the corresponding edge embeddings.
The second category of baseline models consists of edge-wise representation learning methods, including GEBE~\cite{yang2022scalable} and AttrE2Vec~\cite{bielak2022attre2vec}. Additionally, we include a fully connected neural network (FC) to transform edge attributes without considering any network structure information.

\begin{figure*}[!t]
\centering
\begin{small}
\begin{tikzpicture}
    \begin{customlegend}[legend columns=4,
        legend entries={\algo (\ffp),\algo (\dvffp)-sum,\algo (\dvffp)-max,\algo (\dvffp)-concat},
        legend style={at={(0.45,1.15)},anchor=north,draw=none,font=\small,column sep=0.25cm}]
    \addlegendimage{line width=0.25mm,mark size=2pt,mark=o, color=O1}
    \addlegendimage{line width=0.25mm,mark size=2pt,mark=square, color=B2}
    \addlegendimage{line width=0.25mm,mark size=2pt,mark=pentagon, color=B6}
    \addlegendimage{line width=0.25mm,mark size=2pt,mark=diamond, color=O3}
    \end{customlegend}
\end{tikzpicture}
\\[-\lineskip]
\vspace{-4mm}
\subfloat[{Amazon}]{
\begin{tikzpicture}[scale=1]\label{subfig:Amazon}
    \begin{axis}[
        height=\columnwidth/2.3,
        width=\columnwidth/2.0,
        ylabel={\em AUC},
        xmin=0.5, xmax=9.5,
        ymin=0.886, ymax=0.898,
        xtick={1,3,5,7,9},
        xticklabel style = {font=\footnotesize},
        xticklabels={0.1,0.3,0.5,0.7,0.9},
        ytick={0.886,0.889,0.892,0.895,0.898},
        scaled y ticks = false,
        yticklabel style={/pgf/number format/fixed zerofill,/pgf/number format/precision=3},
        every axis y label/.style={at={(current axis.north west)},right=3mm,above=0mm},
        label style={font=\small},
        tick label style={font=\small},
    ]
        
    \addplot[line width=0.25mm,mark size=2pt,mark=o, color=O1] 
        plot coordinates {
(1,	0.88797515	)
(2,	0.888056264	)
(3,	0.887623586	)
(4,	0.887742534	)
(5,	0.887553433	)
(6,	0.887227595	)
(7,	0.887355194	)
(8,	0.887456487	)
(9,	0.887301247	)
};
    \addplot[line width=0.25mm,mark size=2pt,mark=square, color=B2] 
        plot coordinates {
(1,	0.894165618	)
(2,	0.893981246	)
(3,	0.893944865	)
(4,	0.893777961	)
(5,	0.894116297	)
(6,	0.894047922	)
(7,	0.89447681	)
(8,	0.894803636	)
(9,	0.894329931	)
};
    \addplot[line width=0.25mm,mark size=2pt,mark=pentagon, color=B6] 
        plot coordinates {
(1,	0.894847536	)
(2,	0.895480559	)
(3,	0.895536113	)
(4,	0.895727936	)
(5,	0.896521065	)
(6,	0.896256141	)
(7,	0.896446683	)
(8,	0.896753706	)
(9,	0.89716433	)
};
    \addplot[line width=0.25mm,mark size=2pt,mark=diamond, color=O3] 
        plot coordinates {
(1,	0.894190708	)
(2,	0.894331219	)
(3,	0.894530645	)
(4,	0.894552151	)
(5,	0.894426545	)
(6,	0.895145109	)
(7,	0.895541958	)
(8,	0.895351221	)
(9,	0.894402576	)
};

    \end{axis}
\end{tikzpicture}\hspace{0mm}%
}%
\subfloat[{AMiner}]{
\begin{tikzpicture}[scale=1]\label{subfig:AMiner}
    \begin{axis}[
        height=\columnwidth/2.3,
        width=\columnwidth/2.0,
        ylabel={\em AUC},
        xmin=0.5, xmax=9.5,
        ymin=0.95, ymax=0.99,
        xtick={1,3,5,7,9},
        xticklabel style = {font=\footnotesize},
        xticklabels={0.1,0.3,0.5,0.7,0.9},
        ytick={0.95,0.96,0.97,0.98,0.99},
        scaled y ticks = false,
        yticklabel style={/pgf/number format/fixed zerofill,/pgf/number format/precision=3},
        every axis y label/.style={at={(current axis.north west)},right=3mm,above=0mm},
        label style={font=\small},
        tick label style={font=\small},
    ]
        
    \addplot[line width=0.25mm,mark size=2pt,mark=o, color=O1] 
        plot coordinates {
(1,	0.974102891	)
(2,	0.973854614	)
(3,	0.973748442	)
(4,	0.973565788	)
(5,	0.97548689	)
(6,	0.9740103	)
(7,	0.974261848	)
(8,	0.975399401	)
(9,	0.975328847	)
};
    \addplot[line width=0.25mm,mark size=2pt,mark=square, color=B2] 
        plot coordinates {
(1,	0.96212247	)
(2,	0.968951646	)
(3,	0.96399075	)
(4,	0.967183937	)
(5,	0.968455675	)
(6,	0.969576875	)
(7,	0.968900645	)
(8,	0.976998035	)
(9,	0.97857109	)
};
    \addplot[line width=0.25mm,mark size=2pt,mark=pentagon, color=B6] 
        plot coordinates {
(1,	0.958566306	)
(2,	0.955805601	)
(3,	0.961876292	)
(4,	0.960341278	)
(5,	0.958992728	)
(6,	0.980496212	)
(7,	0.984953422	)
(8,	0.974827742	)
(9,	0.981816712	)
};
    \addplot[line width=0.25mm,mark size=2pt,mark=diamond, color=O3] 
        plot coordinates {
(1,	0.970714267	)
(2,	0.973283238	)
(3,	0.96708018	)
(4,	0.970357435	)
(5,	0.9660388	)
(6,	0.970185454	)
(7,	0.969534865	)
(8,	0.970106273	)
(9,	0.975704959	)
};
    \end{axis}
\end{tikzpicture}\hspace{0mm}%
}%
\subfloat[{DBLP}]{
\begin{tikzpicture}[scale=1]\label{subfig:DBLP}
    \begin{axis}[
        height=\columnwidth/2.3,
        width=\columnwidth/2.0,
        ylabel={\em AUC},
        xmin=0.5, xmax=9.5,
        ymin=0.930, ymax=0.938,
        xtick={1,3,5,7,9},
        xticklabel style = {font=\footnotesize},
        xticklabels={0.1,0.3,0.5,0.7,0.9},
        ytick={0.930,0.932,0.934,0.936,0.938},
        scaled y ticks = false,
        yticklabel style={/pgf/number format/fixed zerofill,/pgf/number format/precision=3},
        every axis y label/.style={at={(current axis.north west)},right=3mm,above=0mm},
        label style={font=\small},
        tick label style={font=\small},
    ]
        
    \addplot[line width=0.25mm,mark size=2pt,mark=o, color=O1] 
        plot coordinates {
(1,	0.93600524	)
(2,	0.936475448	)
(3,	0.935587829	)
(4,	0.933741586	)
(5,	0.932515943	)
(6,	0.93200449	)
(7,	0.931742961	)
(8,	0.931247969	)
(9,	0.931179392	)
};
    \addplot[line width=0.25mm,mark size=2pt,mark=square, color=B2] 
        plot coordinates {
(1,	0.93147634	)
(2,	0.931973655	)
(3,	0.932056894	)
(4,	0.931226347	)
(5,	0.931927485	)
(6,	0.932778267	)
(7,	0.932724437	)
(8,	0.932864228	)
(9,	0.932562448	)
};
    \addplot[line width=0.25mm,mark size=2pt,mark=pentagon, color=B6] 
        plot coordinates {
(1,	0.93547266	)
(2,	0.935696832	)
(3,	0.93592431	)
(4,	0.935594772	)
(5,	0.936193671	)
(6,	0.936347266	)
(7,	0.936304491	)
(8,	0.936263884	)
(9,	0.936053998	)
};
    \addplot[line width=0.25mm,mark size=2pt,mark=diamond, color=O3] 
        plot coordinates {
(1,	0.932686064	)
(2,	0.932650063	)
(3,	0.932829728	)
(4,	0.931950698	)
(5,	0.932760543	)
(6,	0.933650529	)
(7,	0.93417656	)
(8,	0.934260947	)
(9,	0.934175271	)
};
    \end{axis}
\end{tikzpicture}\hspace{0mm}%
}%
\subfloat[{Google}]{
\begin{tikzpicture}[scale=1]\label{subfig:Google}
    \begin{axis}[
        height=\columnwidth/2.3,
        width=\columnwidth/2.0,
        ylabel={\em AUC},
        xmin=0.5, xmax=9.5,
        ymin=0.81, ymax=0.83,
        xtick={1,3,5,7,9},
        xticklabel style = {font=\footnotesize},
        xticklabels={0.1,0.3,0.5,0.7,0.9},
        ytick={0.81,0.815,0.82,0.825,0.83},
        scaled y ticks = false,
        yticklabel style={/pgf/number format/fixed zerofill,/pgf/number format/precision=3},
        every axis y label/.style={at={(current axis.north west)},right=3mm,above=0mm},
        label style={font=\small},
        tick label style={font=\small},
    ]
        
    \addplot[line width=0.25mm,mark size=2pt,mark=o, color=O1] 
        plot coordinates {
(1,	0.82040079	)
(2,	0.820712382	)
(3,	0.818904721	)
(4,	0.816306789	)
(5,	0.815134518	)
(6,	0.81431304	)
(7,	0.813740359	)
(8,	0.813499696	)
(9,	0.813379067	)
};
    \addplot[line width=0.25mm,mark size=2pt,mark=square, color=B2] 
        plot coordinates {
(1,	0.822275104	)
(2,	0.822106355	)
(3,	0.821246864	)
(4,	0.820459987	)
(5,	0.820220357	)
(6,	0.82060546	)
(7,	0.822044764	)
(8,	0.822934213	)
(9,	0.822630457	)
};
    \addplot[line width=0.25mm,mark size=2pt,mark=pentagon, color=B6] 
        plot coordinates {
(1,	0.828774653	)
(2,	0.829452242	)
(3,	0.828905697	)
(4,	0.829064451	)
(5,	0.828747509	)
(6,	0.8293741	)
(7,	0.828523874	)
(8,	0.829867203	)
(9,	0.829352285	)
};
    \addplot[line width=0.25mm,mark size=2pt,mark=diamond, color=O3] 
        plot coordinates {
(1,	0.826755671	)
(2,	0.826873508	)
(3,	0.825641453	)
(4,	0.825897403	)
(5,	0.824874116	)
(6,	0.82324476	)
(7,	0.825490542	)
(8,	0.82549844	)
(9,	0.82522806	)
};
    \end{axis}
\end{tikzpicture}\hspace{0mm}%
}%
\subfloat[{MAG}]{
\begin{tikzpicture}[scale=1]\label{subfig:MAG}
    \begin{axis}[
        height=\columnwidth/2.3,
        width=\columnwidth/2.0,
        ylabel={\em AUC},
        xmin=0.5, xmax=9.5,
        ymin=0.986, ymax=0.990,
        xtick={1,3,5,7,9},
        xticklabel style = {font=\footnotesize},
        xticklabels={0.1,0.3,0.5,0.7,0.9},
        ytick={0.986,0.987,0.988,0.989,0.990},
        scaled y ticks = false,
        yticklabel style={/pgf/number format/fixed zerofill,/pgf/number format/precision=3},
        every axis y label/.style={at={(current axis.north west)},right=3mm,above=0mm},
        label style={font=\small},
        tick label style={font=\small},
    ]
        
    \addplot[line width=0.25mm,mark size=2pt,mark=o, color=O1] 
        plot coordinates {
(1,	0.988713539	)
(2,	0.987926124	)
(3,	0.987779403	)
(4,	0.987267063	)
(5,	0.986937501	)
(6,	0.986353769	)
(7,	0.986194516	)
(8,	0.986392141	)
(9,	0.986324085	)
};
    \addplot[line width=0.25mm,mark size=2pt,mark=square, color=B2] 
        plot coordinates {
(1,	0.986722089	)
(2,	0.986754229	)
(3,	0.986574432	)
(4,	0.986566624	)
(5,	0.986766096	)
(6,	0.986931327	)
(7,	0.986488243	)
(8,	0.986694073	)
(9,	0.986879898	)
};
    \addplot[line width=0.25mm,mark size=2pt,mark=pentagon, color=B6] 
        plot coordinates {
(1,	0.988918922	)
(2,	0.988792592	)
(3,	0.988928401	)
(4,	0.989015152	)
(5,	0.988810156	)
(6,	0.988637474	)
(7,	0.988805365	)
(8,	0.988022881	)
(9,	0.987356936	)
};
    \addplot[line width=0.25mm,mark size=2pt,mark=diamond, color=O3] 
        plot coordinates {
(1,	0.986300215	)
(2,	0.986331783	)
(3,	0.986744425	)
(4,	0.986974455	)
(5,	0.986901651	)
(6,	0.986702055	)
(7,	0.986746528	)
(8,	0.987206694	)
(9,	0.986100375	)
};
    \end{axis}
\end{tikzpicture}\hspace{0mm}%
}%
        
\end{small}
\vspace{-3mm}
\caption{Varying $\beta$ in \algo (\ffp) and $\gamma$ in \algo (\dvffp).} \label{fig:param}
\vspace{-2mm}
\end{figure*}

For DGI, GEBE, and AttrE2Vec, we collect the source codes from the respective authors and adopt the parameter settings suggested in their papers to generate edge representations before feeding them to MLPs (multi-layer perceptrons) for classification. For GCN, GraphSAGE, SGC, GAT, and GATv2,  we utilize the standard implementations provided in the well-known DGL\footnote{\url{ https://www.dgl.ai}} library and follow a three-layer neural network architecture, including two GNN layers and one linear layer, with ReLU as activation functions between layers. Besides, we set the dropout rate to 0.5 and the maximum number of training epochs to 300, and employ the Adam optimizer~\cite{kingma2017adam} for optimization with a learning rate of 0.001. All the methods are implemented in Python. 
In our solutions (i.e., \algo (\ffp) and \algo (\dvffp)), unless otherwise specified, we set the hyperparameter $\alpha$ and $\beta$ to be 0.5, $\gamma$ in Eq. \eqref{eq:Z-combine} to be 0.5, and dimension $k$ to be 256. The edge representations are then input to MLP classifiers to obtain the final edge labels. 
We report the AP/AUC on the test datasets using the model selected with the best AUC achieved on the cross-validation datasets. 

\subsection{Datasets}\label{sec:datasets}

We use 5 real-world bipartite network datasets in the experiments. The Amazon
dataset~\cite{Ni2019} contains user reviews for movies and TV shows, where the edges represent the reviews written by users on the products, which are associated with labels representing users' ratings on these products.
The Google 
dataset~\cite{Li2022,Yan2023} contains review information of business entities on Google Maps in Hawaii, United States, where the edges are reviews written by users on the business entity IDs. Similarly, the edge labels represent users' ratings on the business entities. 
AMiner~\cite{Tang2008}, MAG~\cite{sinha2015overview,yang2021effective} and DBLP~\cite{Tang:07ICDM} datasets are 3 citation networks, in which nodes represent scholars and their publication venues of a paper. The edges represent the paper abstracts written for that paper. For AMiner, edge labels correspond to the keywords for the papers. For DBLP and MAG, edge labels correspond to the field of study for the papers. We select the most frequent 10 labels as targets to be predicted.
To obtain initial edge features from text for these datasets, we apply the Sentence-BERT~\cite{Reimers2019} model to encode text into 768-dimensional vectors. 
For each dataset, we use breadth-first search (BFS) to sample a smaller subset. Then we randomly split all edges into training, cross-validation and test sets with an $8:1:1$ ratio. The properties and scales of the datasets used in our experiments are summarized in Table \ref{tbl:datasets}. 
 
 

 
 

 
 



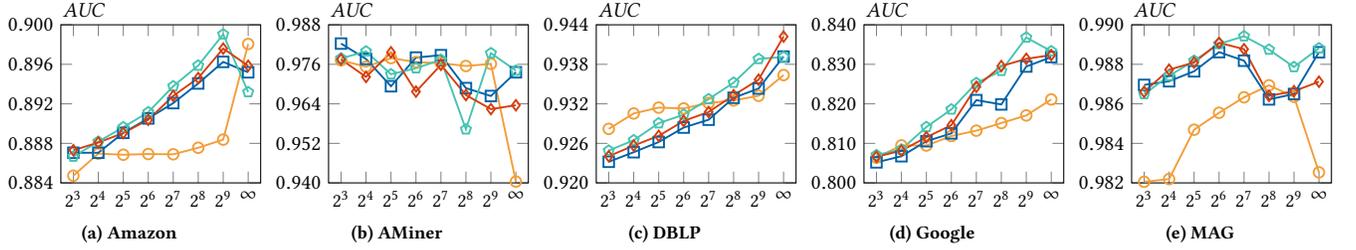
\begin{figure*}[!t]
\centering
\begin{small}
\vspace{-3mm}
\subfloat[{Amazon}]{
\begin{tikzpicture}[scale=1]\label{subfig:Amazon2}
    \begin{axis}[
        height=\columnwidth/2.3,
        width=\columnwidth/2.0,
        ylabel={\em AUC},
        xmin=0.5, xmax=8.5,
        ymin=0.884, ymax=0.9,
        xtick={1,2,3,4,5,6,7,8},
        xticklabel style = {font=\footnotesize},
        xticklabels={$2^3$,$2^4$,$2^5$,$2^6$,$2^7$,$2^8$,$2^9$,$\infty$},
        ytick={0.884,0.888,0.892,0.896,0.90},
        scaled y ticks = false,
        yticklabel style={/pgf/number format/fixed zerofill,/pgf/number format/precision=3},
        every axis y label/.style={at={(current axis.north west)},right=3mm,above=0mm},
        label style={font=\small},
        tick label style={font=\small},
    ]
        
    \addplot[line width=0.25mm,mark size=2pt,mark=o, color=O1] 
        plot coordinates {
(1,	0.884729818	)
(2,	0.886962499	)
(3,	0.886849424	)
(4,	0.886921845	)
(5,	0.886894439	)
(6,	0.887553433	)
(7,	0.888379073	)
(8,	0.898058207	)
};
    \addplot[line width=0.25mm,mark size=2pt,mark=square, color=B2] 
        plot coordinates {
(1,	0.887047721	)
(2,	0.887049462	)
(3,	0.889067339	)
(4,	0.890521715	)
(5,	0.892075445	)
(6,	0.894062833	)
(7,	0.896245972	)
(8,	0.89519188	)
};
    \addplot[line width=0.25mm,mark size=2pt,mark=pentagon, color=B6] 
        plot coordinates {
(1,	0.886663783	)
(2,	0.888172278	)
(3,	0.889638048	)
(4,	0.891156026	)
(5,	0.893782921	)
(6,	0.895874906	)
(7,	0.89903692	)
(8,	0.893190527	)
};
    \addplot[line width=0.25mm,mark size=2pt,mark=diamond, color=O3] 
        plot coordinates {
(1,	0.887303793	)
(2,	0.888083491	)
(3,	0.889046574	)
(4,	0.890392017	)
(5,	0.892803261	)
(6,	0.8946148	)
(7,	0.89759123	)
(8,	0.895852339	)
};

    \end{axis}
\end{tikzpicture}\hspace{0mm}%
}%
\subfloat[{AMiner}]{
\begin{tikzpicture}[scale=1]\label{subfig:AMiner2}
    \begin{axis}[
        height=\columnwidth/2.3,
        width=\columnwidth/2.0,
        ylabel={\em AUC},
        xmin=0.5, xmax=8.5,
        ymin=0.94, ymax=0.988,
        xtick={1,2,3,4,5,6,7,8},
        xticklabel style = {font=\footnotesize},
        xticklabels={$2^3$,$2^4$,$2^5$,$2^6$,$2^7$,$2^8$,$2^9$,$\infty$},
        ytick={0.94,0.952,0.964,0.976,0.988},
        scaled y ticks = false,
        yticklabel style={/pgf/number format/fixed zerofill,/pgf/number format/precision=3},
        every axis y label/.style={at={(current axis.north west)},right=3mm,above=0mm},
        label style={font=\small},
        tick label style={font=\small},
    ]
        
    \addplot[line width=0.25mm,mark size=2pt,mark=o, color=O1] 
        plot coordinates {
(1,	0.977229542	)
(2,	0.97525861	)
(3,	0.977887953	)
(4,	0.976676759	)
(5,	0.976364497	)
(6,	0.97548689	)
(7,	0.976144741	)
(8,	0.940395546	)
};
    \addplot[line width=0.25mm,mark size=2pt,mark=square, color=B2] 
        plot coordinates {
(1,	0.982298076	)
(2,	0.977627571	)
(3,	0.969345284	)
(4,	0.978051531	)
(5,	0.978773473	)
(6,	0.968777013	)
(7,	0.966380046	)
(8,	0.973579823	)
};
    \addplot[line width=0.25mm,mark size=2pt,mark=pentagon, color=B6] 
        plot coordinates {
(1,	0.977790503	)
(2,	0.979923378	)
(3,	0.972979315	)
(4,	0.974817184	)
(5,	0.977373802	)
(6,	0.956260586	)
(7,	0.979389003	)
(8,	0.97409043	)
};
    \addplot[line width=0.25mm,mark size=2pt,mark=diamond, color=O3] 
        plot coordinates {
(1,	0.977495291	)
(2,	0.972175781	)
(3,	0.979641045	)
(4,	0.967718401	)
(5,	0.975845916	)
(6,	0.966661064	)
(7,	0.962406873	)
(8,	0.963567251	)
};
    \end{axis}
\end{tikzpicture}\hspace{0mm}%
}%
\subfloat[{DBLP}]{
\begin{tikzpicture}[scale=1]\label{subfig:DBLP2}
    \begin{axis}[
        height=\columnwidth/2.3,
        width=\columnwidth/2.0,
        ylabel={\em AUC},
        xmin=0.5, xmax=8.5,
        ymin=0.92, ymax=0.944,
        xtick={1,2,3,4,5,6,7,8},
        xticklabel style = {font=\footnotesize},
        xticklabels={$2^3$,$2^4$,$2^5$,$2^6$,$2^7$,$2^8$,$2^9$,$\infty$},
        ytick={0.92,0.926,0.932,0.938,0.944},
        scaled y ticks = false,
        yticklabel style={/pgf/number format/fixed zerofill,/pgf/number format/precision=3},
        every axis y label/.style={at={(current axis.north west)},right=3mm,above=0mm},
        label style={font=\small},
        tick label style={font=\small},
    ]
        
    \addplot[line width=0.25mm,mark size=2pt,mark=o, color=O1] 
        plot coordinates {
(1,	0.928199597	)
(2,	0.930537591	)
(3,	0.931427994	)
(4,	0.931303236	)
(5,	0.932131983	)
(6,	0.932515943	)
(7,	0.933220781	)
(8,	0.936368252	)
};
    \addplot[line width=0.25mm,mark size=2pt,mark=square, color=B2] 
        plot coordinates {
(1,	0.923203393	)
(2,	0.924657987	)
(3,	0.926198908	)
(4,	0.928399027	)
(5,	0.929591596	)
(6,	0.932920039	)
(7,	0.934322592	)
(8,	0.939177601	)
};
    \addplot[line width=0.25mm,mark size=2pt,mark=pentagon, color=B6] 
        plot coordinates {
(1,	0.92490113	)
(2,	0.926478421	)
(3,	0.929058223	)
(4,	0.930360616	)
(5,	0.932716969	)
(6,	0.935216691	)
(7,	0.938819757	)
(8,	0.939114093	)
};
    \addplot[line width=0.25mm,mark size=2pt,mark=diamond, color=O3] 
        plot coordinates {
(1,	0.924045301	)
(2,	0.92566824	)
(3,	0.927137041	)
(4,	0.929403649	)
(5,	0.930761095	)
(6,	0.933310616	)
(7,	0.935670497	)
(8,	0.942209433	)
};
    \end{axis}
\end{tikzpicture}\hspace{0mm}%
}%
\subfloat[{Google}]{
\begin{tikzpicture}[scale=1]\label{subfig:Google2}
    \begin{axis}[
        height=\columnwidth/2.3,
        width=\columnwidth/2.0,
        ylabel={\em AUC},
        xmin=0.5, xmax=8.5,
        ymin=0.80, ymax=0.84,
        xtick={1,2,3,4,5,6,7,8},
        xticklabel style = {font=\footnotesize},
        xticklabels={$2^3$,$2^4$,$2^5$,$2^6$,$2^7$,$2^8$,$2^9$,$\infty$},
        ytick={0.8,0.81,0.82,0.83,0.84},
        scaled y ticks = false,
        yticklabel style={/pgf/number format/fixed zerofill,/pgf/number format/precision=3},
        every axis y label/.style={at={(current axis.north west)},right=3mm,above=0mm},
        label style={font=\small},
        tick label style={font=\small},
    ]
        
    \addplot[line width=0.25mm,mark size=2pt,mark=o, color=O1] 
        plot coordinates {
(1,	0.806383918	)
(2,	0.809493204	)
(3,	0.809448842	)
(4,	0.81184811	)
(5,	0.813212982	)
(6,	0.815134518	)
(7,	0.817069714	)
(8,	0.821109588	)
};
    \addplot[line width=0.25mm,mark size=2pt,mark=square, color=B2] 
        plot coordinates {
(1,	0.805192165	)
(2,	0.806744572	)
(3,	0.810543337	)
(4,	0.812524048	)
(5,	0.820938386	)
(6,	0.819853536	)
(7,	0.829442025	)
(8,	0.831825648	)
};
    \addplot[line width=0.25mm,mark size=2pt,mark=pentagon, color=B6] 
        plot coordinates {
(1,	0.807029148	)
(2,	0.808454518	)
(3,	0.814141838	)
(4,	0.818591233	)
(5,	0.825278668	)
(6,	0.828357097	)
(7,	0.83678841	)
(8,	0.83332223	)
};
    \addplot[line width=0.25mm,mark size=2pt,mark=diamond, color=O3] 
        plot coordinates {
(1,	0.806606072	)
(2,	0.808147803	)
(3,	0.811559224	)
(4,	0.814591818	)
(5,	0.824326945	)
(6,	0.829554708	)
(7,	0.831393444	)
(8,	0.832331982	) 
};
    \end{axis}
\end{tikzpicture}\hspace{0mm}%
}%
\subfloat[{MAG}]{
\begin{tikzpicture}[scale=1]\label{subfig:MAG2}
    \begin{axis}[
        height=\columnwidth/2.3,
        width=\columnwidth/2.0,
        ylabel={\em AUC},
        xmin=0.5, xmax=8.5,
        ymin=0.982, ymax=0.99,
        xtick={1,2,3,4,5,6,7,8},
        xticklabel style = {font=\footnotesize},
        xticklabels={$2^3$,$2^4$,$2^5$,$2^6$,$2^7$,$2^8$,$2^9$,$\infty$},
        ytick={0.982,0.984,0.986,0.988,0.99},
        scaled y ticks = false,
        yticklabel style={/pgf/number format/fixed zerofill,/pgf/number format/precision=3},
        every axis y label/.style={at={(current axis.north west)},right=3mm,above=0mm},
        label style={font=\small},
        tick label style={font=\small},
    ]
        
    \addplot[line width=0.25mm,mark size=2pt,mark=o, color=O1] 
        plot coordinates {
(1,	0.982055314	)
(2,	0.98219271	)
(3,	0.984690459	)
(4,	0.985547318	)
(5,	0.986332199	)
(6,	0.986937501	)
(7,	0.986358989	)
(8,	0.982533252	)
};
    \addplot[line width=0.25mm,mark size=2pt,mark=square, color=B2] 
        plot coordinates {
(1,	0.986957802	)
(2,	0.987143023	)
(3,	0.987646585	)
(4,	0.988617809	)
(5,	0.988182305	)
(6,	0.986224837	)
(7,	0.986497655	)
(8,	0.988610666	)
};
    \addplot[line width=0.25mm,mark size=2pt,mark=pentagon, color=B6] 
        plot coordinates {
(1,	0.986493127	)
(2,	0.987417616	)
(3,	0.988182105	)
(4,	0.989031469	)
(5,	0.989421901	)
(6,	0.988744497	)
(7,	0.987862924	)
(8,	0.988813793	)
};
    \addplot[line width=0.25mm,mark size=2pt,mark=diamond, color=O3] 
        plot coordinates {
(1,	0.986610411	)
(2,	0.987728918	)
(3,	0.988092036	)
(4,	0.989092404	)
(5,	0.988765432	)
(6,	0.986400547	)
(7,	0.986641257	)
(8,	0.987119819	)
};
    \end{axis}
\end{tikzpicture}\hspace{0mm}%
}%
        
\end{small}
\vspace{-3mm}
\caption{Varying $k$.} \label{fig:param2}
\vspace{-3mm}
\end{figure*}

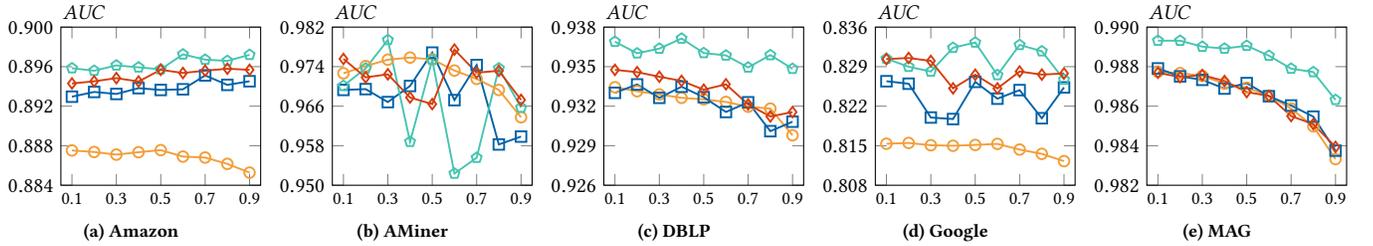
\begin{figure*}[!t]
\centering
\begin{small}
\vspace{-3mm}
\subfloat[{Amazon}]{
\begin{tikzpicture}[scale=1]\label{subfig:Amazon3}
    \begin{axis}[
        height=\columnwidth/2.3,
        width=\columnwidth/2.0,
        ylabel={\em AUC},
        xmin=0.5, xmax=9.5,
        ymin=0.884, ymax=0.9,
        xtick={1,3,5,7,9},
        xticklabel style = {font=\footnotesize},
        xticklabels={0.1,0.3,0.5,0.7,0.9},
        ytick={0.884,0.888,0.892,0.896,0.90},
        scaled y ticks = false,
        yticklabel style={/pgf/number format/fixed zerofill,/pgf/number format/precision=3},
        every axis y label/.style={at={(current axis.north west)},right=3mm,above=0mm},
        label style={font=\small},
        tick label style={font=\small},
    ]
        
    \addplot[line width=0.25mm,mark size=2pt,mark=o, color=O1] 
        plot coordinates {
(1,	0.887520907	)
(2,	0.887364833	)
(3,	0.887096993	)
(4,	0.887342977	)
(5,	0.887553433	)
(6,	0.88689951	)
(7,	0.886803304	)
(8,	0.886162207	)
(9,	0.885277105	)
};
    \addplot[line width=0.25mm,mark size=2pt,mark=square, color=B2] 
        plot coordinates {
(1,	0.892962047	)
(2,	0.893442018	)
(3,	0.893237698	)
(4,	0.893825453	)
(5,	0.893643052	)
(6,	0.893709734	)
(7,	0.895114723	)
(8,	0.89415766	)
(9,	0.894523984	)
};
    \addplot[line width=0.25mm,mark size=2pt,mark=pentagon, color=B6] 
        plot coordinates {
(1,	0.895846621	)
(2,	0.895589921	)
(3,	0.89613033	)
(4,	0.895932602	)
(5,	0.89573213	)
(6,	0.897245967	)
(7,	0.89671028	)
(8,	0.896586039	)
(9,	0.897200817	)
};
    \addplot[line width=0.25mm,mark size=2pt,mark=diamond, color=O3] 
        plot coordinates {
(1,	0.894300914	)
(2,	0.894517217	)
(3,	0.894838482	)
(4,	0.894501648	)
(5,	0.89568049	)
(6,	0.895360388	)
(7,	0.895593968	)
(8,	0.895792247	)
(9,	0.895682233	)
};

    \end{axis}
\end{tikzpicture}\hspace{0mm}%
}%
\subfloat[{AMiner}]{
\begin{tikzpicture}[scale=1]\label{subfig:AMiner3}
    \begin{axis}[
        height=\columnwidth/2.3,
        width=\columnwidth/2.0,
        ylabel={\em AUC},
        xmin=0.5, xmax=9.5,
        ymin=0.95, ymax=0.982,
        xtick={1,3,5,7,9},
        xticklabel style = {font=\footnotesize},
        xticklabels={0.1,0.3,0.5,0.7,0.9},
        ytick={0.95,0.958,0.966,0.974,0.982},
        scaled y ticks = false,
        yticklabel style={/pgf/number format/fixed zerofill,/pgf/number format/precision=3},
        every axis y label/.style={at={(current axis.north west)},right=3mm,above=0mm},
        label style={font=\small},
        tick label style={font=\small},
    ]
        
    \addplot[line width=0.25mm,mark size=2pt,mark=o, color=O1] 
        plot coordinates {
(1,	0.972659176	)
(2,	0.974115334	)
(3,	0.975412307	)
(4,	0.97583695	)
(5,	0.97548689	)
(6,	0.973213735	)
(7,	0.971520469	)
(8,	0.969285072	)
(9,	0.963755062	)
};
    \addplot[line width=0.25mm,mark size=2pt,mark=square, color=B2] 
        plot coordinates {
(1,	0.969292877	)
(2,	0.969457201	)
(3,	0.966829217	)
(4,	0.970093254	)
(5,	0.976845289	)
(6,	0.967228033	)
(7,	0.974332822	)
(8,	0.958252511	)
(9,	0.959849391	)
};
    \addplot[line width=0.25mm,mark size=2pt,mark=pentagon, color=B6] 
        plot coordinates {
(1,	0.970115906	)
(2,	0.973663297	)
(3,	0.979405883	)
(4,	0.958792433	)
(5,	0.975381359	)
(6,	0.952362422	)
(7,	0.955567082	)
(8,	0.973680485	)
(9,	0.965708988	)
};
    \addplot[line width=0.25mm,mark size=2pt,mark=diamond, color=O3] 
        plot coordinates {
(1,	0.975565942	)
(2,	0.971829275	)
(3,	0.972453751	)
(4,	0.967753723	)
(5,	0.966420491	)
(6,	0.977448479	)
(7,	0.972782141	)
(8,	0.973168112	)
(9,	0.967275731	)
};
    \end{axis}
\end{tikzpicture}\hspace{0mm}%
}%
\subfloat[{DBLP}]{
\begin{tikzpicture}[scale=1]\label{subfig:DBLP3}
    \begin{axis}[
        height=\columnwidth/2.3,
        width=\columnwidth/2.0,
        ylabel={\em AUC},
        xmin=0.5, xmax=9.5,
        ymin=0.926, ymax=0.938,
        xtick={1,3,5,7,9},
        xticklabel style = {font=\footnotesize},
        xticklabels={0.1,0.3,0.5,0.7,0.9},
        ytick={0.926,0.929,0.932,0.935,0.938},
        scaled y ticks = false,
        yticklabel style={/pgf/number format/fixed zerofill,/pgf/number format/precision=3},
        every axis y label/.style={at={(current axis.north west)},right=3mm,above=0mm},
        label style={font=\small},
        tick label style={font=\small},
    ]
        
    \addplot[line width=0.25mm,mark size=2pt,mark=o, color=O1] 
        plot coordinates {
(1,	0.93343834	)
(2,	0.933134873	)
(3,	0.932890484	)
(4,	0.932634515	)
(5,	0.932515943	)
(6,	0.932323893	)
(7,	0.931956937	)
(8,	0.931815444	)
(9,	0.929793223	)
};
    \addplot[line width=0.25mm,mark size=2pt,mark=square, color=B2] 
        plot coordinates {
(1,	0.933012704	)
(2,	0.933652542	)
(3,	0.932634416	)
(4,	0.933479018	)
(5,	0.932683894	)
(6,	0.931560064	)
(7,	0.932295246	)
(8,	0.930103466	)
(9,	0.930823955	)
};
    \addplot[line width=0.25mm,mark size=2pt,mark=pentagon, color=B6] 
        plot coordinates {
(1,	0.936893639	)
(2,	0.936015073	)
(3,	0.93637217	)
(4,	0.937137801	)
(5,	0.936037117	)
(6,	0.935853944	)
(7,	0.934942332	)
(8,	0.935881487	)
(9,	0.934840522	)
};
    \addplot[line width=0.25mm,mark size=2pt,mark=diamond, color=O3] 
        plot coordinates {
(1,	0.934757663	)
(2,	0.934586072	)
(3,	0.934234335	)
(4,	0.933923932	)
(5,	0.933252782	)
(6,	0.933661328	)
(7,	0.932133716	)
(8,	0.931219545	)
(9,	0.931552538	)
};
    \end{axis}
\end{tikzpicture}\hspace{0mm}%
}%
\subfloat[{Google}]{
\begin{tikzpicture}[scale=1]\label{subfig:Google3}
    \begin{axis}[
        height=\columnwidth/2.3,
        width=\columnwidth/2.0,
        ylabel={\em AUC},
        xmin=0.5, xmax=9.5,
        ymin=0.808, ymax=0.836,
        xtick={1,3,5,7,9},
        xticklabel style = {font=\footnotesize},
        xticklabels={0.1,0.3,0.5,0.7,0.9},
        ytick={0.808,0.815,0.822,0.829,0.836},
        scaled y ticks = false,
        yticklabel style={/pgf/number format/fixed zerofill,/pgf/number format/precision=3},
        every axis y label/.style={at={(current axis.north west)},right=3mm,above=0mm},
        label style={font=\small},
        tick label style={font=\small},
    ]
        
    \addplot[line width=0.25mm,mark size=2pt,mark=o, color=O1] 
        plot coordinates {
(1,	0.815350761	)
(2,	0.815492182	)
(3,	0.815083026	)
(4,	0.814991099	)
(5,	0.815134518	)
(6,	0.815327691	)
(7,	0.814318559	)
(8,	0.81355789	)
(9,	0.812267024	)
};
    \addplot[line width=0.25mm,mark size=2pt,mark=square, color=B2] 
        plot coordinates {
(1,	0.82642012	)
(2,	0.82598123	)
(3,	0.819993981	)
(4,	0.819722721	)
(5,	0.826309311	)
(6,	0.823331228	)
(7,	0.824849117	)
(8,	0.819864863	)
(9,	0.825323715	)
};
    \addplot[line width=0.25mm,mark size=2pt,mark=pentagon, color=B6] 
        plot coordinates {
(1,	0.830410952	)
(2,	0.828987297	)
(3,	0.82813825	)
(4,	0.832351126	)
(5,	0.833286836	)
(6,	0.82752041	)
(7,	0.832854342	)
(8,	0.831734704	)
(9,	0.826701274	)
};
    \addplot[line width=0.25mm,mark size=2pt,mark=diamond, color=O3] 
        plot coordinates {
(1,	0.830337345	)
(2,	0.830538788	)
(3,	0.830019728	)
(4,	0.825152653	)
(5,	0.827569936	)
(6,	0.825146223	)
(7,	0.828140709	)
(8,	0.827594704	)
(9,	0.827797319	)
};
    \end{axis}
\end{tikzpicture}\hspace{0mm}%
}%
\subfloat[{MAG}]{
\begin{tikzpicture}[scale=1]\label{subfig:MAG3}
    \begin{axis}[
        height=\columnwidth/2.3,
        width=\columnwidth/2.0,
        ylabel={\em AUC},
        xmin=0.5, xmax=9.5,
        ymin=0.982, ymax=0.99,
        xtick={1,3,5,7,9},
        xticklabel style = {font=\footnotesize},
        xticklabels={0.1,0.3,0.5,0.7,0.9},
        ytick={0.982,0.984,0.986,0.988,0.99},
        scaled y ticks = false,
        yticklabel style={/pgf/number format/fixed zerofill,/pgf/number format/precision=3},
        every axis y label/.style={at={(current axis.north west)},right=3mm,above=0mm},
        label style={font=\small},
        tick label style={font=\small},
    ]
        
    \addplot[line width=0.25mm,mark size=2pt,mark=o, color=O1] 
        plot coordinates {
(1,	0.987742151	)
(2,	0.987673935	)
(3,	0.987534033	)
(4,	0.987143237	)
(5,	0.986937501	)
(6,	0.98652488	)
(7,	0.985877993	)
(8,	0.984979937	)
(9,	0.983323832	)
};
    \addplot[line width=0.25mm,mark size=2pt,mark=square, color=B2] 
        plot coordinates {
(1,	0.987911595	)
(2,	0.987509483	)
(3,	0.987333919	)
(4,	0.986891246	)
(5,	0.987169168	)
(6,	0.986504818	)
(7,	0.986035645	)
(8,	0.985470698	)
(9,	0.983749572	)
};
    \addplot[line width=0.25mm,mark size=2pt,mark=pentagon, color=B6] 
        plot coordinates {
(1,	0.989313977	)
(2,	0.989313425	)
(3,	0.989012713	)
(4,	0.988917449	)
(5,	0.989061025	)
(6,	0.988555922	)
(7,	0.987900064	)
(8,	0.987727404	)
(9,	0.98631957	)
};
    \addplot[line width=0.25mm,mark size=2pt,mark=diamond, color=O3] 
        plot coordinates {
(1,	0.987694665	)
(2,	0.987470885	)
(3,	0.987578804	)
(4,	0.987274423	)
(5,	0.986716645	)
(6,	0.986520999	)
(7,	0.985500752	)
(8,	0.985092359	)
(9,	0.983939281	)
};
    \end{axis}
\end{tikzpicture}\hspace{0mm}%
}
\end{small}
\vspace{-3mm}
\caption{Varying $\alpha$.} \label{fig:param3}
\vspace{-2mm}
\end{figure*}

\subsection{Edge Classification}

Table \ref{tbl:classification-perf} presents the edge classification performance of all methods on five datasets. Overall, our proposed methods consistently outperform all competitors on all five datasets. On review datasets like Amazon and Google, our method (\algo (\dvffp)) using the max aggregator achieves approximately 0.9\%-3.1\% improvement in AP and approximately 0.4\%-0.7\% improvement in AUC compared to the best competitors. On citation network datasets like AMiner using keywords as edge labels, our method (\algo (\ffp)) achieves around 177.5\% improvement in AP and around 1.5\% improvement in AUC compared to the best competitors. Note that most of the baselines cannot achieve high AP (below 0.2), due to the difficulties of classifying keywords in AMiner, as the number of keyword labels in the original dataset is much higher than the number of labels in the other datasets. On citation network datasets like DBLP and MAG using the field of study as edge labels, our method (\algo (\dvffp)) using the max aggregator achieves approximately 7.1\%-15.0\% improvement in AP and approximately 1.2\%-1.9\% improvement in AUC compared to the best competitors. 
It is worth noting that for datasets like Amazon and AMiner, FC performs the best among the competitors. This indicates the difficulties of capturing the structural similarities in these graph datasets. However, our methods can still successfully generate better edge representations on these datasets. 
Another observation is that \algo (\dvffp) outperforms \algo (\ffp) and other competitors on four datasets out of five. This suggests the importance of introducing intermediate edge representation independently from the views of heterogeneous node sets $\ZM_\U$ and $\ZM_\V$.



 \subsection{Parameter Analysis}

In this section, we experimentally study the effects of varying three key parameters in our proposed method, including $\beta$ used in Eq. \eqref{eq:P}, $k$ for matrix $\QM$ dimension, and $\alpha$ used in Eq. \eqref{eq:obj}. We report the AUC scores by \algo (\ffp) and \algo (\dvffp) with three different aggregators 
when varying these parameters. 

In Figure \ref{fig:param}, we report how AUC scores vary for different $\beta$ for \algo (\ffp) and different $\gamma$ for \algo (\dvffp). For Amazon, Google, and MAG, \algo (\dvffp) with the max aggregator consistently performs the best across different $\beta$. For AMiner, the AUC scores reach the maximum when $\beta=0.7$ and then decrease for \algo (\dvffp) with max aggregator. For DBLP, the \algo (\ffp) performs the best when $\beta=0.2$, but when $\beta\ge 0.3$, \algo (\dvffp) with max aggregator becomes the best among all our methods. We can also observe that on DBLP, Google, and MAG, \algo (\ffp) is more sensitive to the change of $\beta$ compared with \algo (\dvffp).

In Figure \ref{fig:param2}, we report how AUC scores vary for different SVD dimensions $k$. $\infty$ in Figure \ref{fig:param2} refers to using power iteration to solve for the edge embeddings. For Amazon, DBLP, and Google, the AUC scores of all our methods increase as $k$ increases (excluding the solution from power iteration), as larger embedding dimensions can contain more graph structural information. For AMiner, AUC reaches the maximum when $k=8$ for \algo (\dvffp) using the sum aggregator. For MAG, AUC reaches the maximum for \algo (\dvffp) using the max aggregator when $k=128$ and then decreases.

In Figure \ref{fig:param3}, we report how AUC scores vary for different $\alpha$. By tuning $\alpha$, we observe that \algo (\dvffp) with the max aggregator can achieve the best performance on these datasets compared with other methods. In particular, on Amazon, AMiner, DBLP, and Google, \algo (\dvffp) with the max aggregator performs best with $\alpha$ value between 0.3 and 0.6. On MAG, \algo (\dvffp) with the max aggregator shows a decreasing trend as $\alpha$ increases. As mentioned in Section \ref{sec:obj}, $\alpha$ balances the importance between the edge representations derived from edge features and graph structures. This suggests on MAG, our methods can achieve improvements by a careful trade-off between edge attributes and graph structures.

\section{Related Work}\label{sec:relatedwork}

This section reviews existing graph node/edge representation learning on unipartite/bipartite graphs, as well as their applications.

\subsection{Node-wise Representation Learning}
Node-wise GRL refers to the process of generating embeddings for the nodes of a graph. Conventional approaches for addressing this task involve methods based on matrix factorization and random walk. In matrix factorization-based methods, such as HOPE~\cite{Ou}, AROPE~\cite{Zhang2018}, PRONE~\cite{Zhang2019}, NRP~\cite{Yang2019}, PANE~\cite{Yang2021}, and SketchNE~\cite{Xie2023}, a proximity-based matrix $\PM$ is initially created for the graph, where each element 
$\PM[i,j]$ denotes the proximity measure between nodes $i$ and $j$. Subsequently, a dimension reduction technique is employed to derive lower-dimensional representations for the nodes. In random walk-based methods, such as Deepwalk~\cite{Perozzia}, LINE~\cite{Tang}, and node2vec~\cite{Grover}, the process begins with the generation of random walks for each node to capture the underlying graph structures. Subsequently, the co-occurrence in these random walks is employed to assess node similarities and generate node embedding vectors.

Another line of research lies in graph neural networks (GNNs). The major categories of GNNs, for example, GCN~\cite{Kipf2016}, GraphSAGE~\cite{Hamilton2017}, SGC~\cite{Wu2019}, DGI~\cite{velivckovic2018deep}, GAT~\cite{Velickovivelickovic}, and GATv2~\cite{brody2022how}, adopt ideas from convolutional neural networks for modeling graph-structured data. GNNs aggregate local neighborhood information to get contextual representation for graph nodes and have shown promising results in this area. To consider the effect of edge attributes, some new GNN models are proposed to incorporate them during the training process.  
EGAT~\cite{wang2021egat} proposes edge-featured graph attention layers that can accept node and edge features as inputs and handle them spontaneously within the models.
GERI~\cite{sun2019novel} constructs a heterogeneous graph using the attribute information and applies random walk with a modified heterogeneous skip-gram to learn node embeddings. 
EEGNN~\cite{liu2023eegnn} proposes a framework called edge-enhanced graph neural network that uses the structural information extracted from a Bayesian nonparametric model for graphs to consider the effect of self-loops and multiple edges between two nodes and improve the performance of various deep message-passing GNNs.
EGNN~\cite{gong2019exploiting} uses multi-dimensional nonnegative-valued edge features represented as a tensor and applies GCN/GAT to exploit multiple attributes associated with each edge.
GraphBEAN~\cite{fathony2023interaction} applies autoencoders on bipartite graphs with both node and edge attributes to obtain node embeddings for node and edge level anomaly detection.


\subsection{Edge-wise Representation Learning}
Edge-wise GRL refers to the process of generating embeddings for
edges of a graph. 
\cite{li2018edge} uses random walks to sample a series of edge sequences to generate edge embeddings and apply clustering algorithms for community detection. 
Edge2Vec~\cite{gao2019edge2vec} uses deep auto-encoders and
skip-gram models to generate edge embeddings that preserve both
the local and global structure information of edges for biomedical knowledge discovery. 
AttrE2Vec~\cite{bielak2022attre2vec} generates random walks starting from a node and uses aggregation functions to aggregate node/edge features in the random walks and obtain node/edge representations. Then, it uses auto-encoders and self-attention networks with feature reconstruction loss and graph structural loss to build edge embeddings in an unsupervised manner.
\cite{wang2023efficient} uses matrix factorization and feature aggregation to generate edge representation vectors based on the graph structure surrounding edges and edge attributes, which can encode high-order proximities of edges and edge attribute information into low-dimensional vectors.
CEN-DGCNN~\cite{zhou2023co} introduces a deep graph convolutional neural network that integrates node and edge features, preventing over-smoothing. It captures non-local structural features and refines high-order node features by considering long-distance dependencies and multi-dimensional edge features.
DoubleFA~\cite{10.1007/978-3-031-39831-5_6} proposes to use top-k Personalized PageRank to conduct proximal feature aggregation and anomaly feature aggregation using edge features for edge anomaly detection.

\subsection{Bipartite Graph Representation Learning}
For a comprehensive review of existing bipartite graph representation learning methods, we suggest readers check \cite{Giamphy2023}.
BiANE~\cite{Huang2020a} employs auto-encoders to model inter-partition and intra-partition proximity using attribute proximity and structure proximity through a latent correlation training approach. 
Cascade-BGNN~\cite{Zhu2020b} utilizes customized inter-domain message passing and intra-domain alignment with adversarial learning for message aggregation across and within graph partitions. 
BiGI~\cite{10.1145/3437963.3441783} utilizes GCN to generate initial node embeddings and applies a subgraph-level attention mechanism to maximize the mutual information between local and global node representations.
DualHGCN~\cite{10.1145/3442381.3449954} transforms the multiplex bipartite network into two sets of homogeneous hypergraphs and uses spectral hypergraph convolutional operators to capture information within and across domains.
GEBE~\cite{yang2022scalable} proposes proximity matrices derived from the edge weight matrix and applies matrix factorization to capture multi-hop similarity/proximity between homogeneous/heterogeneous nodes. 
AnchorGNN~\cite{10.14778/3626292.3626300} proposes a global-local learning framework that leverages an anchor-based message-passing schema to generate node embeddings for large bipartite graphs.

\section{Conclusions}\label{sec:conclude}
In this work, we introduce the problem of ERL on EABGs and propose \algo models to address this problem. Building on an in-depth theoretical analysis of extending the feature propagation paradigm in GNNs to ERL on EABGs, we design the FFP scheme that is able to effectively capture long-range dependencies between edges for generating high-quality edge representations without entailing vast computational costs. On the basis of FFP, we propose the dual-view FFP by leveraging the semantics of two sets of heterogeneous nodes in the input bipartite graphs to enhance edge representations. The effectiveness of our proposed \algo models is validated by our extensive experiments comparing \algo against nine baselines over five real datasets.

\begin{acks}
This research is supported by the Ministry of Education, Singapore, under its MOE AcRF TIER 3 Grant (MOE-MOET32022-0001). Any opinions, findings and conclusions or recommendations expressed in this material are those of the author(s) and do not reflect the views of the Ministry of Education, Singapore. Renchi Yang is supported by the NSFC Young Scientists Fund (No. 62302414) and the Hong Kong RGC ECS grant (No. 22202623).
\end{acks}

\appendix
\section{Proofs}\label{sec:proof}

\begin{proof}[\bf Proof of Lemma \ref{lem:PUPV}]
We first prove that $\PM_\U$ is a row-stochastic matrix. Since it is symmetric, then its doubly stochastic property naturally follows.
By Eq. \eqref{eq:PU-PV}, the $(i,j)$-th entry of $\PM_\U$ is
\begin{equation*}
\sum_{e_j\in \EDG}\PM_\U[e_i,e_j]=\sum_{e_j\in \EDG} \frac{1}{\DM[u^{(i)},u^{(i)}]\cdot \mathbb{1}_{u^{(i)}\in e_j} }=1, 
\end{equation*}
where $\mathbb{1}_{u^{(i)}\in e_j}$ is an indicator function which equals $1$ when node $u^{(i)}$ is an endpoint of edge $e_j$. A similar proof can be done for $\PM_\V$.
\end{proof}

\begin{lemma}\label{lem:sigma}
If $\sigma_1$ be the largest singular value of $\EM\DM^{-\frac{1}{2}}$, $\sigma_1\le 1$.
\begin{proof}
According to \cite{strang2022introduction}, The singular values are the square roots of the non-zero eigenvalues of $\EM\DM^{-\frac{1}{2}}\cdot (\EM\DM^{-\frac{1}{2}})^{\top}=\PM$. This implies that all the eigenvalues of $\PM$ are non-negative. Thus, $\sigma_1 = \sqrt{\lambda_1}$, where $\lambda_1$ is the largest eigenvealue of $\PM$. Recall that $\PM$ is doubly stochastic. Then, we have 
$$\|\PM \|_\infty= \underset{{1\le i\le |\EDG|}}{\max}{\sum_{j=1}^{|\EDG|}\PM_{i,j}}=1.$$
By Theorem 5.6.9 in \cite{horn2012matrix},
\begin{equation*}
|\lambda_1| \le \rho(\PM) \le \|\PM \|_\infty =1,
\end{equation*}
where $\rho(\PM)$ is the spectral radius of $\PM$, which leads to $\sigma_1 \le 1$.
\end{proof}
\end{lemma}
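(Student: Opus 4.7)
The plan is to reduce the claim about singular values of $\EM\DM^{-1/2}$ to a claim about eigenvalues of $\PM$, and then exploit the stochastic structure that has already been established. First I would use the standard identification $\sigma_i(\EM\DM^{-1/2})^2 = \lambda_i\bigl(\EM\DM^{-1/2}(\EM\DM^{-1/2})^{\top}\bigr) = \lambda_i(\PM)$, where the final equality is Eq.~\eqref{eq:P}. Since $\PM$ is a Gram matrix, it is symmetric positive semidefinite, so its eigenvalues are real and non-negative and the square root is well-defined. Thus it suffices to show $\lambda_1(\PM) \le 1$.

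Next, I would invoke Lemma~\ref{lem:PPP} to write $\PM = \beta\PM_\U + (1-\beta)\PM_\V$ with $\beta \in [0,1]$, and then Lemma~\ref{lem:PUPV} to note that both $\PM_\U$ and $\PM_\V$ are (non-negative) doubly stochastic. Since row-sum-to-one and non-negativity are both preserved under convex combinations, $\PM$ is itself non-negative with all row sums equal to $1$, so $\|\PM\|_\infty = 1$.

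To finish, I would apply the classical inequality $\rho(M) \le \|M\|_\infty$ valid for any matrix $M$ (a consequence of the fact that the spectral radius is dominated by every induced matrix norm, e.g.\ Theorem 5.6.9 in Horn--Johnson). This yields $\lambda_1(\PM) \le \rho(\PM) \le \|\PM\|_\infty = 1$, and therefore $\sigma_1 = \sqrt{\lambda_1(\PM)} \le 1$.

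I do not anticipate any real obstacle here: the SVD--eigendecomposition bridge is textbook, the doubly stochastic property of $\PM$ is already in place from Lemmas~\ref{lem:PPP} and \ref{lem:PUPV}, and the spectral-radius bound via $\|\cdot\|_\infty$ is a standard fact. The only mildly non-trivial point worth stating explicitly is that eigenvalues of $\PM$ are non-negative (so the square root makes sense), which follows immediately from the Gram-matrix form $\PM = (\EM\DM^{-1/2})(\EM\DM^{-1/2})^{\top}$.
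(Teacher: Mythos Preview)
Your proposal is correct and follows essentially the same route as the paper: identify $\sigma_1^2$ with the top eigenvalue of the Gram matrix $\PM=\EM\DM^{-1/2}(\EM\DM^{-1/2})^{\top}$, use that $\PM$ is doubly stochastic to get $\|\PM\|_\infty=1$, and bound the spectral radius by this norm via Theorem~5.6.9 in Horn--Johnson. The only cosmetic difference is that you spell out the doubly stochastic property of $\PM$ through Lemmas~\ref{lem:PPP} and~\ref{lem:PUPV}, whereas the paper simply recalls it as already established.
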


\begin{proof}[\bf Lemma \ref{lem:closed-form-sol}]
First, it is easy to derive that Eq.~\ref{eq:reg} can be transformed to its equivalent form $\OO_r = trace(\ZM^{\top}(\IM-\PM)\ZM)$. Accordingly, Eq.~\ref{eq:obj} is converted into
\begin{equation*}
(1-\alpha)\cdot\|\ZM - f_{\Theta}(\XM)\|^2_F + \alpha\cdot trace(\ZM^{\top}(\IM-\PM)\ZM).
\end{equation*}
By setting its derivative w.r.t. $\ZM$ to zero, we obtain the optimal $\ZM$ as:
\begin{align}
& \frac{\partial{\{(1-\alpha)\cdot\|\ZM - \XM\|^2_F + \alpha \cdot trace(\ZM^{\top}(\IM-\PM)\ZM)\}}}{\partial{\ZM}}=0 \notag\\
& \Longrightarrow (1-\alpha)\cdot(\ZM - f_{\theta}(\XM)) + \alpha (\IM-\PM)\ZM = 0 \notag\\
& \Longrightarrow \ZM = (1-\alpha)\cdot \left(\IM-\alpha\PM\right)^{-1}\cdot f_{\Theta}(\XM). \label{eq:Z-derivative}
\end{align}
By the definition of Neumann series, \ie $(\IM-\MM)^{-1}=\sum_{t=0}^{\infty}{\MM^t}$, we have 
$$(\IM - \alpha \PM)^{-1} = \sum_{t=0}^{\infty}{\alpha^t\cdot \PM^{t}}.$$
Plugging the above equation into Eq. \eqref{eq:Z-derivative} completes the proof.
\end{proof}

\begin{proof}[\bf Proof of Lemma \ref{lem:PPP}]
Recall that $\DM$ can be represented by 
\[
\begin{pmatrix}
\DM_\U
  & \rvline & \bigzero \\
\hline
  \bigzero & \rvline &
\DM_\V
\end{pmatrix}
\]
Then, as per Eq. \eqref{eq:P}, we have
\begin{align*}
\PM &= (\sqrt{\beta}\EM_{\U} \mathbin\Vert  \sqrt{1-\beta} \EM_{\V})\cdot \DM^{-1}\cdot (\sqrt{\beta} \EM_{\U} \mathbin\Vert  \sqrt{1-\beta} \EM_{\V})^{\top}\\
&= \left(\sqrt{\beta} \EM_{\U}\DM_\U^{-\frac{1}{2}} \mathbin\Vert  \sqrt{1-\beta} \EM_{\V}\DM_\V^{-\frac{1}{2}}\right)\cdot \left(\sqrt{\beta} \EM_{\U}\DM_\U^{-\frac{1}{2}} \mathbin\Vert  \sqrt{1-\beta} \EM_{\V}\DM_\V^{-\frac{1}{2}}\right)^{\top}\\
&= {\beta}\cdot \EM_{\U}\DM_\U^{-1}\EM_\U^{\top} + (1-\beta) \cdot \EM_{\V}\DM_\V^{-1}\EM_\V^{\top}\\
&= \beta\cdot\PM_\U + (1-\beta)\cdot \PM_\V,
\end{align*}
which finishes the proof.
\end{proof}

\begin{proof}[\bf Proof of Eq. \eqref{eq:Z-2-parts}]
According to Eq. \eqref{eq:Z}, we have
\begin{equation*}
\left((1-\alpha)\sum_{t=0}^{T_{mix}-1}{\alpha^t \PM^{t}} \cdot f_{\Theta}(\XM) \right) + \left( (1-\alpha)\sum_{t=T_{mix}}^{\infty}{\alpha^t \boldsymbol{\Pi} }\right).
\end{equation*}
Note that we can rewrite the second part as:
\begin{align*}
 (1-\alpha)\sum_{t=T_{mix}}^{\infty}{\alpha^t \boldsymbol{\Pi} } & =  \left((1-\alpha)\sum_{t=0}^{\infty}{\alpha^t} - (1-\alpha)\sum_{t=0}^{T_{mix}-1}{\alpha^t}\right) \cdot \boldsymbol{\Pi} \\
 & = (1- (1-\alpha^{T_{mix}}))\cdot \boldsymbol{\Pi} = \alpha^{T_{mix}}\boldsymbol{\Pi}.
\end{align*}
Eq. \eqref{eq:Z-2-parts} naturally follows.
\end{proof}

\begin{proof}[\bf Proof of Lemma \ref{lem:mix}]
We first prove that $\sigma_2^2$ is the second largest eigenvalue of $\PM$. By Eq. \eqref{eq:P}, $\PM=(\EM\DM^{-1/2})\cdot (\EM\DM^{-1/2})^{\top}$, which indicates that the eigenvalues of $\PM$ are the squared singular values of $\EM\DM^{-1/2}$ \cite{horn2012matrix,strang2022introduction}. Since singular values are non-negative, the second largest eigenvalue of $\PM$ is $\sigma_2^2$.
According to the fact of $\PM$ is a reversible Markov chain and Theorem 12.5 in \cite{levin2017markov}, $T_{mix}$ satisfies $T_{mix} \ge \frac{1}{1-\sigma_2^2}-1$.
\end{proof}

\begin{proof}[\bf Proof of Theorem \ref{lem:QQS}]
We first consider that $\UM\SVM\VM^{\top}$ is the exact full SVD of $\EM\DM^{-1/2}$. According to Lemma \ref{lem:sigma}, we can get
\begin{align*}
\QM\QM^{\top} &= \UM\cdot {\frac{1}{1-\alpha\SVM^2}}\cdot \UM^{\top} = \sum_{t=0}^{\infty} {\alpha^t \cdot \UM\cdot (\SVM^2)^t\cdot \UM^{\top} }.
\end{align*}
Since $\UM$ and $\VM$ are semi-unitary matrices, i.e., $\UM^{\top}\UM$ and $\VM^{\top}\VM$,
\begin{align*}
\QM\QM^{\top} & = (1-\alpha)\sum_{t=0}^{\infty} {\alpha^t  (\UM\SVM^2\UM^{\top})^t } = \sum_{t=0}^{\infty} {\alpha^t  (\UM\SVM\cdot (\VM^{\top}\VM)\cdot \SVM\UM^{\top})^t } \\
& = (1-\alpha)\sum_{t=0}^{\infty} {\alpha^t  (\EM\DM^{-\frac{1}{2}} \cdot (\EM\DM^{-\frac{1}{2}})^{\top})^t }\\
& = (1-\alpha)\sum_{t=0}^{\infty} {\alpha^t  (\EM\DM^{-1}\EM^{\top})^t } = (1-\alpha)\sum_{t=0}^{\infty} {\alpha^t  \PM^t }.
\end{align*}

According to \cite{horn2012matrix,strang2022introduction}, the definition of $\PM$ in Eq. \eqref{eq:P} (i.e., $\PM=\EM\DM^{-1/2}\cdot (\EM\DM^{-1/2})$) implies that the singular values of $\EM\DM^{-{1}/{2}}$ are the square roots of the eigenvalues of $\PM$, and the left singular vectors of $\EM\DM^{-\frac{1}{2}}$ are the eigenvectors of $\PM$. In particular, due to the non-negativity of singular values, the $k$-th largest eigenvalue of $\PM$ is equal to $\sigma_k^2$ where $\sigma_k$ denotes the $k$-th largest singular value of $\EM\DM^{-{1}/{2}}$.

Recall that $\PM$ is doubly stochastic, meaning that $\PM$ is a symmetric matrix. Using Theorem 4.1 in \cite{Zhang2018}, we can derive that the singular values of $\PM$ are the absolute values of the corresponding eigenvalues, and the left singular vectors of $\PM$ are equal to the eigenvectors of $\PM$. 
Since all the eigenvalues of $\PM$ are non-negative, its $k$-th largest eigenvalue is equal to the $k$-th largest singular value of $\PM$.

Combining the above two conclusions, we can extrapolate that the $k$-th largest singular value of $\PM$ is equal to $\sigma_k^2$, and the left singular vectors of $\EM\DM^{-{1}/{2}}$ are the left singular vectors of $\PM$.

\begin{theorem}[Eckart–Young Theorem \cite{gloub1996matrix}]\label{lem:eym}
Suppose that $\MM_{k}\in\mathbb{R}^{n\times k}$ is the rank-$k$ approximation to $\MM\in\mathbb{R}^{n\times n}$ obtained by exact SVD, then
$$\min_{rank(\widehat{\MM})\le k}{\|\MM-\widehat{\MM}\|_F}=\|\MM-\MM_{k}\|_F=\sigma_{k+1},$$
where $\sigma_{i}$ represents the $i$-th largest singular value of $\MM$.
\end{theorem}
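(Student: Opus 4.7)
The plan is to prove the Eckart--Young theorem by combining the unitary invariance of the Frobenius norm with a singular-value-level comparison. First, write the full SVD $\MM = \UM\SVM\VM^\top$ with $\UM,\VM$ orthogonal and $\SVM = \operatorname{diag}(\sigma_1,\dots,\sigma_n)$, $\sigma_1 \ge \dots \ge \sigma_n \ge 0$. For any candidate $\widehat{\MM}$ with $\operatorname{rank}(\widehat{\MM}) \le k$, set $\mathbf{N} = \UM^\top \widehat{\MM} \VM$; then $\operatorname{rank}(\mathbf{N}) \le k$, and by orthogonal invariance $\|\MM - \widehat{\MM}\|_F = \|\SVM - \mathbf{N}\|_F$. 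So the task reduces to minimizing $\|\SVM - \mathbf{N}\|_F$ over all rank-$\le k$ matrices $\mathbf{N}$, where $\SVM$ is diagonal with decreasing non-negative entries.

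The lower bound proceeds through the Hoffman--Wielandt-style inequality for singular values, $\sum_{i=1}^n \bigl(\sigma_i(\MM) - \sigma_i(\widehat{\MM})\bigr)^2 \le \|\MM - \widehat{\MM}\|_F^2$. Because $\operatorname{rank}(\widehat{\MM}) \le k$ forces $\sigma_i(\widehat{\MM}) = 0$ for $i > k$, the right-hand side is bounded below by $\sum_{i=k+1}^n \sigma_i(\MM)^2$. For tightness, I would plug in $\MM_k = \sum_{i=1}^k \sigma_i u_i v_i^\top$ and verify directly, using the orthonormality of the singular vectors of $\MM$, that $\|\MM - \MM_k\|_F^2 = \sum_{i=k+1}^n \sigma_i^2$. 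These two bounds coincide, which identifies $\MM_k$ as a minimizer of the Frobenius error and reduces to $\sigma_{k+1}$ exactly in the regime covered by the statement (e.g.\ when $\sigma_{k+2}=\cdots=\sigma_n=0$, or equivalently in the spectral-norm reading of the same SVD truncation).

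The main obstacle is supplying the Hoffman--Wielandt comparison of singular values, since everything else is bookkeeping with orthogonal transformations and a direct Frobenius-norm expansion. I would derive it from von Neumann's trace inequality $\operatorname{Tr}(\MM^\top \widehat{\MM}) \le \sum_i \sigma_i(\MM)\,\sigma_i(\widehat{\MM})$, applied after expanding $\|\MM - \widehat{\MM}\|_F^2 = \|\MM\|_F^2 + \|\widehat{\MM}\|_F^2 - 2\operatorname{Tr}(\MM^\top \widehat{\MM})$ and substituting $\|\MM\|_F^2 = \sum_i \sigma_i(\MM)^2$ with the analogous identity for $\widehat{\MM}$. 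Von Neumann's inequality itself can be obtained from a Birkhoff--von Neumann decomposition on the polytope of doubly stochastic matrices; depending on the audience I would either cite both results as standard matrix-analysis facts or relegate compact sketches to an appendix. The optimality claim (that $\MM_k$ is a minimizer, not merely an achiever of the bound) then follows because any other rank-$\le k$ matrix matching the same singular-value profile would have to equal $\MM_k$ up to alignment of singular subspaces, which is again pinned down by the SVD of $\MM$.
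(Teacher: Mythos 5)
The paper never proves this statement at all: it is imported as a textbook fact from \cite{gloub1996matrix} and invoked as a black box inside the proof of Theorem \ref{lem:QQS}, so there is no in-paper argument to measure yours against. Your outline is the standard Mirsky-style proof and is essentially sound: reduce by orthogonal invariance to approximating the diagonal matrix of singular values, get the lower bound from von Neumann's trace inequality via $\|\MM-\widehat{\MM}\|_F^2=\|\MM\|_F^2+\|\widehat{\MM}\|_F^2-2\Tr(\MM^{\top}\widehat{\MM})\ge\sum_i\bigl(\sigma_i(\MM)-\sigma_i(\widehat{\MM})\bigr)^2$ combined with $\sigma_i(\widehat{\MM})=0$ for $i>k$, and check achievability by expanding $\|\MM-\MM_k\|_F^2$ against the orthonormal singular vectors. (Your initial reduction to $\mathbf{N}=\UM^{\top}\widehat{\MM}\VM$ is harmless but unnecessary once you go the trace-inequality route, and the closing uniqueness remark is both unneeded for the statement and false when $\sigma_k=\sigma_{k+1}$; neither affects correctness.)

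The one substantive issue is the one you yourself flag: the printed equality $\|\MM-\MM_k\|_F=\sigma_{k+1}$ is the \emph{spectral-norm} form of Eckart--Young, whereas for the Frobenius norm the correct value is $\bigl(\sum_{i=k+1}^{n}\sigma_i^2\bigr)^{1/2}\ge\sigma_{k+1}$, with equality only when $\sigma_{k+2}=\cdots=\sigma_n=0$. So the literal identity in the statement is not provable in general, and your argument correctly yields the inequality rather than the claimed equality. This matters downstream: the paper uses the theorem to assert $\|\UM\SVM^2\UM^{\top}-\PM\|_F\le\sigma_{k}^2$ in the proof of Theorem \ref{lem:QQS}, which under the genuine Frobenius version should read $\bigl(\sum_{i>k}\sigma_i^4\bigr)^{1/2}$ for the relevant matrix --- a defect of the paper's statement and its use, not of your proof.
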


Let $\UM\SVM\VM^{\top}$ be the exact top-$k$ SVD of $\EM\DM^{-{1}/{2}}$. Then, $\UM\SVM^2\UM^{\top}$ is the exact top-$k$ SVD of $\PM$. By leveraging Eckart–Young Theorem in Theorem \ref{lem:eym}, we obtain
\begin{equation*}
\|\UM\SVM^2\UM^{\top} - \PM\|_F \le \sigma_k^2.
\end{equation*}
Next, we prove that $\UM$ and $\frac{1}{1-\alpha\SVM^2}$ are the top-$k$ left singular vectors and singular values of $(1-\alpha)\sum_{t=0}^{\infty} {\alpha^t  \PM^t }$, respectively. We assume $k=|\EDG|$, which means 
$$\UM {\frac{1}{1-\alpha\SVM^2}} \UM^{\top} = (1-\alpha)\sum_{t=0}^{\infty} {\alpha^t  \PM^t }.$$
Consider vector $\UM[:,i]$ and scalar $\frac{1}{1-\alpha\SVM[i,i]^2}$ and denote by $\mathbf{e}_i$ a one-hot vector. We can derive 
\begin{align*}
(1-\alpha)\sum_{t=0}^{\infty} {\alpha^t  \PM^t }\cdot \UM[:,i] &= \UM {\frac{1}{1-\alpha\SVM^2}} \UM^{\top} \cdot \UM[:,i]\\
&=\UM {\frac{1}{1-\alpha\SVM^2}}\cdot \mathbf{e}_i\\
&=\frac{1}{1-\alpha\SVM[i,i]^2}\cdot \UM[:,i].
\end{align*}
In addition, by Lemma \ref{lem:sigma}, $\frac{1}{1-\alpha\SVM[i,i]^2}$ is non-negative and is monotonically decreasing with $i$. As a consequence, we can conclude that $\UM[:,i]$ and scalar $\frac{1}{1-\alpha\SVM[i,i]^2}$ are an eigenpair of $(1-\alpha)\sum_{t=0}^{\infty} {\alpha^t  \PM^t }$, and thus, its $i$-th largest left singular vector and singular value. Then, using Theorem \ref{lem:eym} yields
\begin{align*}
\|\QM\QM^{\top} - (1-\alpha)\sum_{t=0}^{\infty} {\alpha^t  \PM^t } \|_F & = \|\UM\frac{1}{1-\alpha\SVM^2}\UM^{\top} - (1-\alpha)\sum_{t=0}^{\infty} {\alpha^t  \PM^t } \|_F\\
& \le \frac{1}{1-\alpha \sigma_k^2},
\end{align*}
which finishes the proof.
\end{proof}

\balance
\bibliographystyle{ACM-Reference-Format}
\bibliography{sample}


\begin{thebibliography}{66}


\ifx \showCODEN    \undefined \def \showCODEN     #1{\unskip}     \fi
\ifx \showDOI      \undefined \def \showDOI       #1{#1}\fi
\ifx \showISBNx    \undefined \def \showISBNx     #1{\unskip}     \fi
\ifx \showISBNxiii \undefined \def \showISBNxiii  #1{\unskip}     \fi
\ifx \showISSN     \undefined \def \showISSN      #1{\unskip}     \fi
\ifx \showLCCN     \undefined \def \showLCCN      #1{\unskip}     \fi
\ifx \shownote     \undefined \def \shownote      #1{#1}          \fi
\ifx \showarticletitle \undefined \def \showarticletitle #1{#1}   \fi
\ifx \showURL      \undefined \def \showURL       {\relax}        \fi
\providecommand\bibfield[2]{#2}
\providecommand\bibinfo[2]{#2}
\providecommand\natexlab[1]{#1}
\providecommand\showeprint[2][]{arXiv:#2}

\bibitem[Xie(2023)]%
        {Xie2023}
 \bibinfo{year}{2023}\natexlab{}.
\newblock \showarticletitle{{SketchNE: Embedding Billion-Scale Networks
  Accurately in One Hour}}.
\newblock \bibinfo{journal}{\emph{IEEE Transactions on Knowledge and Data
  Engineering}} \bibinfo{volume}{35}, \bibinfo{number}{10} (\bibinfo{date}{oct}
  \bibinfo{year}{2023}), \bibinfo{pages}{10666--10680}.
\newblock
\showISSN{15582191}
\urldef\tempurl%
\url{https://doi.org/10.1109/TKDE.2023.3250703}
\showDOI{\tempurl}


\bibitem[Bielak et~al\mbox{.}(2022)]%
        {bielak2022attre2vec}
\bibfield{author}{\bibinfo{person}{Piotr Bielak}, \bibinfo{person}{Tomasz
  Kajdanowicz}, {and} \bibinfo{person}{Nitesh~V Chawla}.}
  \bibinfo{year}{2022}\natexlab{}.
\newblock \showarticletitle{Attre2vec: Unsupervised attributed edge
  representation learning}.
\newblock \bibinfo{journal}{\emph{Information Sciences}}  \bibinfo{volume}{592}
  (\bibinfo{year}{2022}), \bibinfo{pages}{82--96}.
\newblock


\bibitem[Brody et~al\mbox{.}(2022)]%
        {brody2022how}
\bibfield{author}{\bibinfo{person}{Shaked Brody}, \bibinfo{person}{Uri Alon},
  {and} \bibinfo{person}{Eran Yahav}.} \bibinfo{year}{2022}\natexlab{}.
\newblock \showarticletitle{{How Attentive are Graph Attention Networks?}}. In
  \bibinfo{booktitle}{\emph{ICLR}}.
\newblock
\showeprint{2105.14491}


\bibitem[Cao et~al\mbox{.}(2021)]%
        {10.1145/3437963.3441783}
\bibfield{author}{\bibinfo{person}{Jiangxia Cao}, \bibinfo{person}{Xixun Lin},
  \bibinfo{person}{Shu Guo}, \bibinfo{person}{Luchen Liu},
  \bibinfo{person}{Tingwen Liu}, {and} \bibinfo{person}{Bin Wang}.}
  \bibinfo{year}{2021}\natexlab{}.
\newblock \showarticletitle{{Bipartite Graph Embedding via Mutual Information
  Maximization}}. In \bibinfo{booktitle}{\emph{Proceedings of the 14th ACM
  International Conference on Web Search and Data Mining}}
  \emph{(\bibinfo{series}{WSDM '21})}. \bibinfo{publisher}{Association for
  Computing Machinery}, \bibinfo{address}{New York, NY, USA},
  \bibinfo{pages}{635--643}.
\newblock
\showISBNx{9781450382977}
\urldef\tempurl%
\url{https://doi.org/10.1145/3437963.3441783}
\showDOI{\tempurl}


\bibitem[Fathony et~al\mbox{.}(2023)]%
        {fathony2023interaction}
\bibfield{author}{\bibinfo{person}{Rizal Fathony}, \bibinfo{person}{Jenn Ng},
  {and} \bibinfo{person}{Jia Chen}.} \bibinfo{year}{2023}\natexlab{}.
\newblock \showarticletitle{Interaction-Focused Anomaly Detection on Bipartite
  Node-and-Edge-Attributed Graphs}. In \bibinfo{booktitle}{\emph{2023
  International Joint Conference on Neural Networks (IJCNN)}}. IEEE,
  \bibinfo{pages}{1--10}.
\newblock


\bibitem[Gao et~al\mbox{.}(2019)]%
        {gao2019edge2vec}
\bibfield{author}{\bibinfo{person}{Zheng Gao}, \bibinfo{person}{Gang Fu},
  \bibinfo{person}{Chunping Ouyang}, \bibinfo{person}{Satoshi Tsutsui},
  \bibinfo{person}{Xiaozhong Liu}, \bibinfo{person}{Jeremy Yang},
  \bibinfo{person}{Christopher Gessner}, \bibinfo{person}{Brian Foote},
  \bibinfo{person}{David Wild}, \bibinfo{person}{Ying Ding}, {et~al\mbox{.}}}
  \bibinfo{year}{2019}\natexlab{}.
\newblock \showarticletitle{edge2vec: Representation learning using edge
  semantics for biomedical knowledge discovery}.
\newblock \bibinfo{journal}{\emph{BMC bioinformatics}} \bibinfo{volume}{20},
  \bibinfo{number}{1} (\bibinfo{year}{2019}), \bibinfo{pages}{1--15}.
\newblock


\bibitem[Garchery and Granitzer(2020)]%
        {garchery2020adsage}
\bibfield{author}{\bibinfo{person}{Mathieu Garchery} {and}
  \bibinfo{person}{Michael Granitzer}.} \bibinfo{year}{2020}\natexlab{}.
\newblock \showarticletitle{Adsage: Anomaly detection in sequences of
  attributed graph edges applied to insider threat detection at fine-grained
  level}.
\newblock \bibinfo{journal}{\emph{arXiv preprint arXiv:2007.06985}}
  (\bibinfo{year}{2020}).
\newblock


\bibitem[Giamphy et~al\mbox{.}(2023)]%
        {Giamphy2023}
\bibfield{author}{\bibinfo{person}{Edward Giamphy}, \bibinfo{person}{Jean-Loup
  Guillaume}, \bibinfo{person}{Antoine Doucet}, {and} \bibinfo{person}{Kevin
  Sanchis}.} \bibinfo{year}{2023}\natexlab{}.
\newblock \showarticletitle{{A survey on bipartite graphs embedding}}.
\newblock \bibinfo{journal}{\emph{Social Network Analysis and Mining}}
  \bibinfo{volume}{13}, \bibinfo{number}{1} (\bibinfo{year}{2023}),
  \bibinfo{pages}{54}.
\newblock
\showISSN{1869-5469}
\urldef\tempurl%
\url{https://doi.org/10.1007/s13278-023-01058-z}
\showDOI{\tempurl}


\bibitem[Gloub and Van~Loan(1996)]%
        {gloub1996matrix}
\bibfield{author}{\bibinfo{person}{Gene~H Gloub} {and}
  \bibinfo{person}{Charles~F Van~Loan}.} \bibinfo{year}{1996}\natexlab{}.
\newblock \showarticletitle{Matrix computations}.
\newblock \bibinfo{journal}{\emph{Johns Hopkins Universtiy Press, 3rd edtion}}
  (\bibinfo{year}{1996}).
\newblock


\bibitem[Gong and Cheng(2019)]%
        {gong2019exploiting}
\bibfield{author}{\bibinfo{person}{Liyu Gong} {and} \bibinfo{person}{Qiang
  Cheng}.} \bibinfo{year}{2019}\natexlab{}.
\newblock \showarticletitle{Exploiting edge features for graph neural
  networks}. In \bibinfo{booktitle}{\emph{Proceedings of the IEEE/CVF
  conference on computer vision and pattern recognition}}.
  \bibinfo{pages}{9211--9219}.
\newblock


\bibitem[Grover and Leskovec(2016)]%
        {Grover}
\bibfield{author}{\bibinfo{person}{Aditya Grover} {and} \bibinfo{person}{Jure
  Leskovec}.} \bibinfo{year}{2016}\natexlab{}.
\newblock \showarticletitle{{Node2vec: Scalable feature learning for
  networks}}.
\newblock \bibinfo{journal}{\emph{KDD}}  \bibinfo{volume}{13-17-Augu}
  (\bibinfo{year}{2016}), \bibinfo{pages}{855--864}.
\newblock
\showISBNx{9781450342322}
\showISSN{2154-817X}
\showeprint[arxiv]{1607.00653}


\bibitem[Halko et~al\mbox{.}(2011)]%
        {halko2011finding}
\bibfield{author}{\bibinfo{person}{Nathan Halko}, \bibinfo{person}{Per-Gunnar
  Martinsson}, {and} \bibinfo{person}{Joel~A Tropp}.}
  \bibinfo{year}{2011}\natexlab{}.
\newblock \showarticletitle{Finding structure with randomness: Probabilistic
  algorithms for constructing approximate matrix decompositions}.
\newblock \bibinfo{journal}{\emph{SIAM review}} \bibinfo{volume}{53},
  \bibinfo{number}{2} (\bibinfo{year}{2011}), \bibinfo{pages}{217--288}.
\newblock


\bibitem[Hamilton et~al\mbox{.}(2017a)]%
        {Hamilton2017}
\bibfield{author}{\bibinfo{person}{William~L. Hamilton}, \bibinfo{person}{Rex
  Ying}, {and} \bibinfo{person}{Jure Leskovec}.}
  \bibinfo{year}{2017}\natexlab{a}.
\newblock \showarticletitle{{Inductive representation learning on large
  graphs}}. In \bibinfo{booktitle}{\emph{NIPS}},
  Vol.~\bibinfo{volume}{2017-Decem}. \bibinfo{pages}{1025--1035}.
\newblock
\showISSN{10495258}
\showeprint{1706.02216}


\bibitem[Hamilton et~al\mbox{.}(2017b)]%
        {hamilton2017representation}
\bibfield{author}{\bibinfo{person}{William~L Hamilton}, \bibinfo{person}{Rex
  Ying}, {and} \bibinfo{person}{Jure Leskovec}.}
  \bibinfo{year}{2017}\natexlab{b}.
\newblock \showarticletitle{Representation learning on graphs: Methods and
  applications}.
\newblock \bibinfo{journal}{\emph{arXiv preprint arXiv:1709.05584}}
  (\bibinfo{year}{2017}).
\newblock


\bibitem[He et~al\mbox{.}(2020)]%
        {Zhu2020b}
\bibfield{author}{\bibinfo{person}{Chaoyang He}, \bibinfo{person}{Tian Xie},
  \bibinfo{person}{Yu Rong}, \bibinfo{person}{Wenbing Huang},
  \bibinfo{person}{Junzhou Huang}, \bibinfo{person}{Xiang Ren},
  \bibinfo{person}{Cyrus Shahabi}, {and} \bibinfo{person}{Xi-Ang Ren}.}
  \bibinfo{year}{2020}\natexlab{}.
\newblock \bibinfo{booktitle}{\emph{{Cascade-BGNN: Toward Efficient
  Self-supervised Representation Learning on Large-scale Bipartite Graphs}}}.
\newblock \bibinfo{type}{{T}echnical {R}eport}.
\newblock
\urldef\tempurl%
\url{https://doi.org/10.1145/1122445.1122456}
\showDOI{\tempurl}
\showeprint[arxiv]{1906.11994v3}


\bibitem[Horn and Johnson(2012)]%
        {horn2012matrix}
\bibfield{author}{\bibinfo{person}{Roger~A Horn} {and}
  \bibinfo{person}{Charles~R Johnson}.} \bibinfo{year}{2012}\natexlab{}.
\newblock \bibinfo{booktitle}{\emph{Matrix analysis}}.
\newblock \bibinfo{publisher}{Cambridge university press}.
\newblock


\bibitem[Huang et~al\mbox{.}(2023)]%
        {huang2023node}
\bibfield{author}{\bibinfo{person}{Keke Huang}, \bibinfo{person}{Jing Tang},
  \bibinfo{person}{Juncheng Liu}, \bibinfo{person}{Renchi Yang}, {and}
  \bibinfo{person}{Xiaokui Xiao}.} \bibinfo{year}{2023}\natexlab{}.
\newblock \showarticletitle{Node-wise diffusion for scalable graph learning}.
  In \bibinfo{booktitle}{\emph{Proceedings of the ACM Web Conference 2023}}.
  \bibinfo{pages}{1723--1733}.
\newblock


\bibitem[Huang et~al\mbox{.}(2020)]%
        {Huang2020a}
\bibfield{author}{\bibinfo{person}{Wentao Huang}, \bibinfo{person}{Yuchen Li},
  \bibinfo{person}{Yuan Fang}, \bibinfo{person}{Ju Fan}, {and}
  \bibinfo{person}{Hongxia Yang}.} \bibinfo{year}{2020}\natexlab{}.
\newblock \showarticletitle{{BiANE: Bipartite Attributed Network Embedding}}.
\newblock \bibinfo{journal}{\emph{SIGIR 2020 - Proceedings of the 43rd
  International ACM SIGIR Conference on Research and Development in Information
  Retrieval}} \bibinfo{volume}{10}, \bibinfo{number}{20}
  (\bibinfo{year}{2020}), \bibinfo{pages}{149--158}.
\newblock
\showISBNx{9781450380164}
\urldef\tempurl%
\url{https://doi.org/10.1145/3397271.3401068}
\showDOI{\tempurl}


\bibitem[Jo et~al\mbox{.}(2021)]%
        {jo2021edge}
\bibfield{author}{\bibinfo{person}{Jaehyeong Jo}, \bibinfo{person}{Jinheon
  Baek}, \bibinfo{person}{Seul Lee}, \bibinfo{person}{Dongki Kim},
  \bibinfo{person}{Minki Kang}, {and} \bibinfo{person}{Sung~Ju Hwang}.}
  \bibinfo{year}{2021}\natexlab{}.
\newblock \showarticletitle{Edge representation learning with hypergraphs}.
\newblock \bibinfo{journal}{\emph{Advances in Neural Information Processing
  Systems}}  \bibinfo{volume}{34} (\bibinfo{year}{2021}),
  \bibinfo{pages}{7534--7546}.
\newblock


\bibitem[Kingma and Ba(2015)]%
        {kingma2017adam}
\bibfield{author}{\bibinfo{person}{Diederik~P. Kingma} {and}
  \bibinfo{person}{Jimmy~Lei Ba}.} \bibinfo{year}{2015}\natexlab{}.
\newblock \showarticletitle{{Adam: A method for stochastic optimization}}. In
  \bibinfo{booktitle}{\emph{arXiv Prepr. arXiv1412.6980}}.
  \bibinfo{publisher}{arXiv}.
\newblock
\showeprint{1412.6980}


\bibitem[Kipf and Welling(2017)]%
        {Kipf2016}
\bibfield{author}{\bibinfo{person}{Thomas~N. Kipf} {and} \bibinfo{person}{Max
  Welling}.} \bibinfo{year}{2017}\natexlab{}.
\newblock \showarticletitle{{Semi-supervised classification with graph
  convolutional networks}}.
\newblock \bibinfo{journal}{\emph{ICLR 2017}} (\bibinfo{date}{sep}
  \bibinfo{year}{2017}).
\newblock


\bibitem[Levin and Peres(2017)]%
        {levin2017markov}
\bibfield{author}{\bibinfo{person}{David~A Levin} {and} \bibinfo{person}{Yuval
  Peres}.} \bibinfo{year}{2017}\natexlab{}.
\newblock \bibinfo{booktitle}{\emph{Markov chains and mixing times}}.
  Vol.~\bibinfo{volume}{107}.
\newblock \bibinfo{publisher}{American Mathematical Soc.}
\newblock


\bibitem[Li et~al\mbox{.}(2022)]%
        {Li2022}
\bibfield{author}{\bibinfo{person}{Jiacheng Li}, \bibinfo{person}{Jingbo
  Shang}, {and} \bibinfo{person}{Julian McAuley}.}
  \bibinfo{year}{2022}\natexlab{}.
\newblock \showarticletitle{{UCTopic: Unsupervised Contrastive Learning for
  Phrase Representations and Topic Mining}}.
\newblock \bibinfo{journal}{\emph{Proceedings of the Annual Meeting of the
  Association for Computational Linguistics}}  \bibinfo{volume}{1}
  (\bibinfo{year}{2022}), \bibinfo{pages}{6159--6169}.
\newblock
\showISBNx{9781955917216}
\showISSN{0736587X}
\urldef\tempurl%
\url{https://doi.org/10.18653/v1/2022.acl-long.426}
\showDOI{\tempurl}
\showeprint[arxiv]{2202.13469}


\bibitem[Li et~al\mbox{.}(2018)]%
        {li2018edge}
\bibfield{author}{\bibinfo{person}{Suxue Li}, \bibinfo{person}{Haixia Zhang},
  \bibinfo{person}{Dalei Wu}, \bibinfo{person}{Chuanting Zhang}, {and}
  \bibinfo{person}{Dongfeng Yuan}.} \bibinfo{year}{2018}\natexlab{}.
\newblock \showarticletitle{Edge representation learning for community
  detection in large scale information networks}. In
  \bibinfo{booktitle}{\emph{Mobility Analytics for Spatio-Temporal and Social
  Data: First International Workshop, MATES 2017, Munich, Germany, September 1,
  2017, Revised Selected Papers 1}}. Springer, \bibinfo{pages}{54--72}.
\newblock


\bibitem[Li et~al\mbox{.}(2021)]%
        {li2021temporal}
\bibfield{author}{\bibinfo{person}{Yiming Li}, \bibinfo{person}{Siyue Xie},
  \bibinfo{person}{Xiaxin Liu}, \bibinfo{person}{Qiu~Fang Ying},
  \bibinfo{person}{Wing~Cheong Lau}, \bibinfo{person}{Dah~Ming Chiu},
  \bibinfo{person}{Shou~Zhi Chen}, {et~al\mbox{.}}}
  \bibinfo{year}{2021}\natexlab{}.
\newblock \showarticletitle{Temporal graph representation learning for
  detecting anomalies in e-payment systems}. In \bibinfo{booktitle}{\emph{2021
  International Conference on Data Mining Workshops (ICDMW)}}. IEEE,
  \bibinfo{pages}{983--990}.
\newblock


\bibitem[Liu et~al\mbox{.}(2023)]%
        {liu2023eegnn}
\bibfield{author}{\bibinfo{person}{Yirui Liu}, \bibinfo{person}{Xinghao Qiao},
  \bibinfo{person}{Liying Wang}, {and} \bibinfo{person}{Jessica Lam}.}
  \bibinfo{year}{2023}\natexlab{}.
\newblock \showarticletitle{EEGNN: Edge Enhanced Graph Neural Network with a
  Bayesian Nonparametric Graph Model}. In
  \bibinfo{booktitle}{\emph{International Conference on Artificial Intelligence
  and Statistics}}. PMLR, \bibinfo{pages}{2132--2146}.
\newblock


\bibitem[Ma et~al\mbox{.}(2021)]%
        {ma2021unified}
\bibfield{author}{\bibinfo{person}{Yao Ma}, \bibinfo{person}{Xiaorui Liu},
  \bibinfo{person}{Tong Zhao}, \bibinfo{person}{Yozen Liu},
  \bibinfo{person}{Jiliang Tang}, {and} \bibinfo{person}{Neil Shah}.}
  \bibinfo{year}{2021}\natexlab{}.
\newblock \showarticletitle{A unified view on graph neural networks as graph
  signal denoising}. In \bibinfo{booktitle}{\emph{Proceedings of the 30th ACM
  International Conference on Information \& Knowledge Management}}.
  \bibinfo{pages}{1202--1211}.
\newblock


\bibitem[Motwani and Raghavan(1995)]%
        {motwani1995randomized}
\bibfield{author}{\bibinfo{person}{Rajeev Motwani} {and}
  \bibinfo{person}{Prabhakar Raghavan}.} \bibinfo{year}{1995}\natexlab{}.
\newblock \bibinfo{booktitle}{\emph{Randomized algorithms}}.
\newblock \bibinfo{publisher}{Cambridge university press}.
\newblock


\bibitem[Ni et~al\mbox{.}(2019)]%
        {Ni2019}
\bibfield{author}{\bibinfo{person}{Jianmo Ni}, \bibinfo{person}{Jiacheng Li},
  {and} \bibinfo{person}{Julian McAuley}.} \bibinfo{year}{2019}\natexlab{}.
\newblock \showarticletitle{{Justifying recommendations using distantly-labeled
  reviews and fine-grained aspects}}.
\newblock \bibinfo{journal}{\emph{EMNLP-IJCNLP 2019 - 2019 Conference on
  Empirical Methods in Natural Language Processing and 9th International Joint
  Conference on Natural Language Processing, Proceedings of the Conference}}
  (\bibinfo{year}{2019}), \bibinfo{pages}{188--197}.
\newblock
\showISBNx{9781950737901}
\urldef\tempurl%
\url{https://doi.org/10.18653/v1/d19-1018}
\showDOI{\tempurl}


\bibitem[Ou et~al\mbox{.}(2016)]%
        {Ou}
\bibfield{author}{\bibinfo{person}{Mingdong Ou}, \bibinfo{person}{Peng Cui},
  \bibinfo{person}{Jian Pei}, \bibinfo{person}{Ziwei Zhang}, {and}
  \bibinfo{person}{Wenwu Zhu}.} \bibinfo{year}{2016}\natexlab{}.
\newblock \showarticletitle{{Asymmetric transitivity preserving graph
  embedding}}.
\newblock \bibinfo{journal}{\emph{KDD}}  \bibinfo{volume}{13-17-Augu}
  (\bibinfo{year}{2016}), \bibinfo{pages}{1105--1114}.
\newblock
\showISBNx{9781450342322}


\bibitem[Perozzi et~al\mbox{.}(2014)]%
        {Perozzia}
\bibfield{author}{\bibinfo{person}{Bryan Perozzi}, \bibinfo{person}{Rami
  Al-Rfou}, {and} \bibinfo{person}{Steven Skiena}.}
  \bibinfo{year}{2014}\natexlab{}.
\newblock \showarticletitle{{DeepWalk: Online learning of social
  representations}}.
\newblock \bibinfo{journal}{\emph{SIGKDD}} (\bibinfo{year}{2014}),
  \bibinfo{pages}{701--710}.
\newblock
\showISBNx{9781450329569}
\showeprint{1403.6652}


\bibitem[Reimers and Gurevych(2020)]%
        {Reimers2019}
\bibfield{author}{\bibinfo{person}{Nils Reimers} {and} \bibinfo{person}{Iryna
  Gurevych}.} \bibinfo{year}{2020}\natexlab{}.
\newblock \showarticletitle{{Sentence-BERT: Sentence embeddings using siamese
  BERT-networks}}.
\newblock \bibinfo{journal}{\emph{EMNLP-IJCNLP 2019 - 2019 Conference on
  Empirical Methods in Natural Language Processing and 9th International Joint
  Conference on Natural Language Processing, Proceedings of the Conference}}
  (\bibinfo{year}{2020}), \bibinfo{pages}{3982--3992}.
\newblock
\showISBNx{9781950737901}
\urldef\tempurl%
\url{https://doi.org/10.18653/v1/d19-1410}
\showDOI{\tempurl}
\showeprint[arxiv]{1908.10084}


\bibitem[Sinha et~al\mbox{.}(2015)]%
        {sinha2015overview}
\bibfield{author}{\bibinfo{person}{Arnab Sinha}, \bibinfo{person}{Zhihong
  Shen}, \bibinfo{person}{Yang Song}, \bibinfo{person}{Hao Ma},
  \bibinfo{person}{Darrin Eide}, \bibinfo{person}{Bo-june~Paul Hsu}, {and}
  \bibinfo{person}{Kuansan Wang}.} \bibinfo{year}{2015}\natexlab{}.
\newblock \showarticletitle{An overview of microsoft academic service (mas) and
  applications}. In \bibinfo{booktitle}{\emph{WWW}}. ACM,
  \bibinfo{pages}{243--246}.
\newblock


\bibitem[Strang(2022)]%
        {strang2022introduction}
\bibfield{author}{\bibinfo{person}{Gilbert Strang}.}
  \bibinfo{year}{2022}\natexlab{}.
\newblock \bibinfo{booktitle}{\emph{Introduction to linear algebra}}.
\newblock \bibinfo{publisher}{SIAM}.
\newblock


\bibitem[Sun and Zhang(2019)]%
        {sun2019novel}
\bibfield{author}{\bibinfo{person}{Guolei Sun} {and}
  \bibinfo{person}{Xiangliang Zhang}.} \bibinfo{year}{2019}\natexlab{}.
\newblock \showarticletitle{A novel framework for node/edge attributed graph
  embedding}. In \bibinfo{booktitle}{\emph{Advances in Knowledge Discovery and
  Data Mining: 23rd Pacific-Asia Conference, PAKDD 2019, Macau, China, April
  14-17, 2019, Proceedings, Part III 23}}. Springer, \bibinfo{pages}{169--182}.
\newblock


\bibitem[Tang et~al\mbox{.}(2015)]%
        {Tang}
\bibfield{author}{\bibinfo{person}{Jian Tang}, \bibinfo{person}{Meng Qu},
  \bibinfo{person}{Mingzhe Wang}, \bibinfo{person}{Ming Zhang},
  \bibinfo{person}{Jun Yan}, {and} \bibinfo{person}{Qiaozhu Mei}.}
  \bibinfo{year}{2015}\natexlab{}.
\newblock \showarticletitle{{LINE: Large-scale information network embedding}}.
\newblock \bibinfo{journal}{\emph{WWW 2015}} (\bibinfo{year}{2015}),
  \bibinfo{pages}{1067--1077}.
\newblock
\showISBNx{9781450334693}
\showeprint[arxiv]{1503.03578}


\bibitem[Tang et~al\mbox{.}(2007)]%
        {Tang:07ICDM}
\bibfield{author}{\bibinfo{person}{Jie Tang}, \bibinfo{person}{Duo Zhang},
  {and} \bibinfo{person}{Limin Yao}.} \bibinfo{year}{2007}\natexlab{}.
\newblock \showarticletitle{Social Network Extraction of Academic Researchers}.
  In \bibinfo{booktitle}{\emph{ICDM'07}}. \bibinfo{pages}{292--301}.
\newblock


\bibitem[Tang et~al\mbox{.}(2008)]%
        {Tang2008}
\bibfield{author}{\bibinfo{person}{Jie Tang}, \bibinfo{person}{Jing Zhang},
  \bibinfo{person}{Limin Yao}, \bibinfo{person}{Juanzi Li}, \bibinfo{person}{Li
  Zhang}, {and} \bibinfo{person}{Zhong Su}.} \bibinfo{year}{2008}\natexlab{}.
\newblock \showarticletitle{{ArnetMiner: Extraction and mining of academic
  social networks}}.
\newblock \bibinfo{journal}{\emph{SIGKDD}} (\bibinfo{year}{2008}),
  \bibinfo{pages}{990--998}.
\newblock
\showISBNx{9781605581934}


\bibitem[Veli{\v{c}}kovi{\'{c}} et~al\mbox{.}(2017)]%
        {Velickovivelickovic}
\bibfield{author}{\bibinfo{person}{Petar Veli{\v{c}}kovi{\'{c}}},
  \bibinfo{person}{Guillem Cucurull}, \bibinfo{person}{Arantxa Casanova},
  \bibinfo{person}{Adriana Romero}, \bibinfo{person}{Pietro Li{\`{o}}}, {and}
  \bibinfo{person}{Yoshua Bengio}.} \bibinfo{year}{2017}\natexlab{}.
\newblock \showarticletitle{{Graph Attention Networks}}. In
  \bibinfo{booktitle}{\emph{ICLR 2018}}.
\newblock
\showISBNx{1710.10903v3}
\showISSN{23318422}


\bibitem[Veli{\v{c}}kovi{\'c} et~al\mbox{.}(2018)]%
        {velivckovic2018deep}
\bibfield{author}{\bibinfo{person}{Petar Veli{\v{c}}kovi{\'c}},
  \bibinfo{person}{William Fedus}, \bibinfo{person}{William~L Hamilton},
  \bibinfo{person}{Pietro Li{\`o}}, \bibinfo{person}{Yoshua Bengio}, {and}
  \bibinfo{person}{R~Devon Hjelm}.} \bibinfo{year}{2018}\natexlab{}.
\newblock \showarticletitle{Deep Graph Infomax}. In
  \bibinfo{booktitle}{\emph{ICLR}}.
\newblock


\bibitem[Wang et~al\mbox{.}(2021b)]%
        {wang2021bipartite}
\bibfield{author}{\bibinfo{person}{Andrew~Z Wang}, \bibinfo{person}{Rex Ying},
  \bibinfo{person}{Pan Li}, \bibinfo{person}{Nikhil Rao},
  \bibinfo{person}{Karthik Subbian}, {and} \bibinfo{person}{Jure Leskovec}.}
  \bibinfo{year}{2021}\natexlab{b}.
\newblock \showarticletitle{Bipartite dynamic representations for abuse
  detection}. In \bibinfo{booktitle}{\emph{Proceedings of the 27th ACM SIGKDD
  Conference on Knowledge Discovery \& Data Mining}}.
  \bibinfo{pages}{3638--3648}.
\newblock


\bibitem[Wang et~al\mbox{.}(2023b)]%
        {wang2023efficient}
\bibfield{author}{\bibinfo{person}{Hewen Wang}, \bibinfo{person}{Renchi Yang},
  \bibinfo{person}{Keke Huang}, {and} \bibinfo{person}{Xiaokui Xiao}.}
  \bibinfo{year}{2023}\natexlab{b}.
\newblock \showarticletitle{Efficient and Effective Edge-wise Graph
  Representation Learning}. In \bibinfo{booktitle}{\emph{Proceedings of the
  29th ACM SIGKDD Conference on Knowledge Discovery and Data Mining}}.
  \bibinfo{pages}{2326--2336}.
\newblock


\bibitem[Wang et~al\mbox{.}(2023a)]%
        {10.1007/978-3-031-39831-5_6}
\bibfield{author}{\bibinfo{person}{Hewen Wang}, \bibinfo{person}{Renchi Yang},
  {and} \bibinfo{person}{Jieming Shi}.} \bibinfo{year}{2023}\natexlab{a}.
\newblock \showarticletitle{{Anomaly Detection in Financial Transactions Via
  Graph-Based Feature Aggregations}}. In \bibinfo{booktitle}{\emph{Big Data
  Analytics and Knowledge Discovery: 25th International Conference, DaWaK 2023,
  Penang, Malaysia, August 28–30, 2023, Proceedings}}.
  \bibinfo{publisher}{Springer-Verlag}, \bibinfo{address}{Berlin, Heidelberg},
  \bibinfo{pages}{64--79}.
\newblock
\showISBNx{978-3-031-39830-8}
\urldef\tempurl%
\url{https://doi.org/10.1007/978-3-031-39831-5_6}
\showDOI{\tempurl}


\bibitem[Wang et~al\mbox{.}(2022)]%
        {wang2022review}
\bibfield{author}{\bibinfo{person}{Jianian Wang}, \bibinfo{person}{Sheng
  Zhang}, \bibinfo{person}{Yanghua Xiao}, {and} \bibinfo{person}{Rui Song}.}
  \bibinfo{year}{2022}\natexlab{}.
\newblock \showarticletitle{A Review on Graph Neural Network Methods in
  Financial Applications}.
\newblock \bibinfo{journal}{\emph{Journal of Data Science}}
  \bibinfo{volume}{20}, \bibinfo{number}{2} (\bibinfo{year}{2022}),
  \bibinfo{pages}{111--134}.
\newblock


\bibitem[Wang et~al\mbox{.}(2021a)]%
        {wang2021egat}
\bibfield{author}{\bibinfo{person}{Ziming Wang}, \bibinfo{person}{Jun Chen},
  {and} \bibinfo{person}{Haopeng Chen}.} \bibinfo{year}{2021}\natexlab{a}.
\newblock \showarticletitle{EGAT: Edge-featured graph attention network}. In
  \bibinfo{booktitle}{\emph{Artificial Neural Networks and Machine
  Learning--ICANN 2021: 30th International Conference on Artificial Neural
  Networks, Bratislava, Slovakia, September 14--17, 2021, Proceedings, Part I
  30}}. Springer, \bibinfo{pages}{253--264}.
\newblock


\bibitem[Wu et~al\mbox{.}(2019)]%
        {Wu2019}
\bibfield{author}{\bibinfo{person}{Felix Wu}, \bibinfo{person}{Tianyi Zhang},
  \bibinfo{person}{Amauri~Holanda de Souza}, \bibinfo{person}{Christopher
  Fifty}, \bibinfo{person}{Tao Yu}, {and} \bibinfo{person}{Kilian~Q.
  Weinberger}.} \bibinfo{year}{2019}\natexlab{}.
\newblock \showarticletitle{{Simplifying graph convolutional networks}}. In
  \bibinfo{booktitle}{\emph{arXiv}}.
\newblock
\showISSN{23318422}


\bibitem[Wu et~al\mbox{.}(2023)]%
        {10.14778/3626292.3626300}
\bibfield{author}{\bibinfo{person}{Xueyi Wu}, \bibinfo{person}{Yuanyuan Xu},
  \bibinfo{person}{Wenjie Zhang}, {and} \bibinfo{person}{Ying Zhang}.}
  \bibinfo{year}{2023}\natexlab{}.
\newblock \showarticletitle{{Billion-Scale Bipartite Graph Embedding: A
  Global-Local Induced Approach}}.
\newblock \bibinfo{journal}{\emph{Proc. VLDB Endow.}} \bibinfo{volume}{17},
  \bibinfo{number}{2} (\bibinfo{year}{2023}), \bibinfo{pages}{175--183}.
\newblock
\showISSN{2150-8097}
\urldef\tempurl%
\url{https://doi.org/10.14778/3626292.3626300}
\showDOI{\tempurl}


\bibitem[Wu et~al\mbox{.}(2017)]%
        {wu2017efficient}
\bibfield{author}{\bibinfo{person}{Yulin Wu}, \bibinfo{person}{Xiangting Hou},
  \bibinfo{person}{Wen~Jun Tan}, \bibinfo{person}{Zengxiang Li}, {and}
  \bibinfo{person}{Wentong Cai}.} \bibinfo{year}{2017}\natexlab{}.
\newblock \showarticletitle{Efficient parallel simulation over social contact
  network with skewed degree distribution}. In
  \bibinfo{booktitle}{\emph{Proceedings of the 2017 ACM SIGSIM Conference on
  Principles of Advanced Discrete Simulation}}. \bibinfo{pages}{65--75}.
\newblock


\bibitem[Wu et~al\mbox{.}(2021)]%
        {wu2021representing}
\bibfield{author}{\bibinfo{person}{Zhanghao Wu}, \bibinfo{person}{Paras Jain},
  \bibinfo{person}{Matthew Wright}, \bibinfo{person}{Azalia Mirhoseini},
  \bibinfo{person}{Joseph~E Gonzalez}, {and} \bibinfo{person}{Ion Stoica}.}
  \bibinfo{year}{2021}\natexlab{}.
\newblock \showarticletitle{Representing long-range context for graph neural
  networks with global attention}.
\newblock \bibinfo{journal}{\emph{Advances in Neural Information Processing
  Systems}}  \bibinfo{volume}{34} (\bibinfo{year}{2021}),
  \bibinfo{pages}{13266--13279}.
\newblock


\bibitem[Xue et~al\mbox{.}(2021)]%
        {10.1145/3442381.3449954}
\bibfield{author}{\bibinfo{person}{Hansheng Xue}, \bibinfo{person}{Luwei Yang},
  \bibinfo{person}{Vaibhav Rajan}, \bibinfo{person}{Wen Jiang},
  \bibinfo{person}{Yi Wei}, {and} \bibinfo{person}{Yu Lin}.}
  \bibinfo{year}{2021}\natexlab{}.
\newblock \showarticletitle{{Multiplex Bipartite Network Embedding Using Dual
  Hypergraph Convolutional Networks}}. In \bibinfo{booktitle}{\emph{Proceedings
  of the Web Conference 2021}} \emph{(\bibinfo{series}{WWW '21})}.
  \bibinfo{publisher}{Association for Computing Machinery},
  \bibinfo{address}{New York, NY, USA}, \bibinfo{pages}{1649--1660}.
\newblock
\showISBNx{9781450383127}
\urldef\tempurl%
\url{https://doi.org/10.1145/3442381.3449954}
\showDOI{\tempurl}


\bibitem[Yan et~al\mbox{.}(2023)]%
        {Yan2023}
\bibfield{author}{\bibinfo{person}{An Yan}, \bibinfo{person}{Zhankui He},
  \bibinfo{person}{Jiacheng Li}, \bibinfo{person}{Tianyang Zhang}, {and}
  \bibinfo{person}{Julian McAuley}.} \bibinfo{year}{2023}\natexlab{}.
\newblock \showarticletitle{{Personalized Showcases: Generating Multi-Modal
  Explanations for Recommendations}}.
\newblock \bibinfo{journal}{\emph{SIGIR 2023 - Proceedings of the 46th
  International ACM SIGIR Conference on Research and Development in Information
  Retrieval}} \bibinfo{volume}{5}, \bibinfo{number}{23} (\bibinfo{year}{2023}),
  \bibinfo{pages}{2251--2255}.
\newblock
\showISBNx{9781450394086}
\urldef\tempurl%
\url{https://doi.org/10.1145/3539618.3592036}
\showDOI{\tempurl}
\showeprint[arxiv]{2207.00422}


\bibitem[Yan et~al\mbox{.}(2018)]%
        {yan2018telecomm}
\bibfield{author}{\bibinfo{person}{Hongqiang Yan}, \bibinfo{person}{Yan Jiang},
  {and} \bibinfo{person}{Guannan Liu}.} \bibinfo{year}{2018}\natexlab{}.
\newblock \showarticletitle{Telecomm fraud detection via attributed bipartite
  network}. In \bibinfo{booktitle}{\emph{2018 15th International Conference on
  Service Systems and Service Management (ICSSSM)}}. IEEE,
  \bibinfo{pages}{1--6}.
\newblock


\bibitem[Yang et~al\mbox{.}(2021b)]%
        {yang2021attributes}
\bibfield{author}{\bibinfo{person}{Liang Yang}, \bibinfo{person}{Chuan Wang},
  \bibinfo{person}{Junhua Gu}, \bibinfo{person}{Xiaochun Cao}, {and}
  \bibinfo{person}{Bingxin Niu}.} \bibinfo{year}{2021}\natexlab{b}.
\newblock \showarticletitle{Why do attributes propagate in graph convolutional
  neural networks?}. In \bibinfo{booktitle}{\emph{Proceedings of the AAAI
  Conference on Artificial Intelligence}}, Vol.~\bibinfo{volume}{35}.
  \bibinfo{pages}{4590--4598}.
\newblock


\bibitem[Yang(2022)]%
        {yang2022efficient}
\bibfield{author}{\bibinfo{person}{Renchi Yang}.}
  \bibinfo{year}{2022}\natexlab{}.
\newblock \showarticletitle{Efficient and Effective Similarity Search over
  Bipartite Graphs}. In \bibinfo{booktitle}{\emph{Proceedings of the ACM Web
  Conference 2022}}. \bibinfo{pages}{308--318}.
\newblock


\bibitem[Yang and Shi(2024)]%
        {yang2024efficient}
\bibfield{author}{\bibinfo{person}{Renchi Yang} {and} \bibinfo{person}{Jieming
  Shi}.} \bibinfo{year}{2024}\natexlab{}.
\newblock \showarticletitle{Efficient High-Quality Clustering for Large
  Bipartite Graphs}.
\newblock \bibinfo{journal}{\emph{Proceedings of the ACM on Management of
  Data}} \bibinfo{volume}{2}, \bibinfo{number}{1} (\bibinfo{year}{2024}),
  \bibinfo{pages}{1--27}.
\newblock


\bibitem[Yang et~al\mbox{.}(2022)]%
        {yang2022scalable}
\bibfield{author}{\bibinfo{person}{Renchi Yang}, \bibinfo{person}{Jieming Shi},
  \bibinfo{person}{Keke Huang}, {and} \bibinfo{person}{Xiaokui Xiao}.}
  \bibinfo{year}{2022}\natexlab{}.
\newblock \showarticletitle{Scalable and Effective Bipartite Network
  Embedding}. In \bibinfo{booktitle}{\emph{Proceedings of the 2022
  International Conference on Management of Data}}.
  \bibinfo{pages}{1977--1991}.
\newblock


\bibitem[Yang et~al\mbox{.}(2020a)]%
        {Yang2019}
\bibfield{author}{\bibinfo{person}{Renchi Yang}, \bibinfo{person}{Jieming Shi},
  \bibinfo{person}{Xiaokui Xiao}, \bibinfo{person}{Yin Yang}, {and}
  \bibinfo{person}{Sourav~S. Bhowmick}.} \bibinfo{year}{2020}\natexlab{a}.
\newblock \showarticletitle{{Homogeneous network embedding for massive graphs
  via reweighted personalized pagerank}}.
\newblock \bibinfo{journal}{\emph{VLDB}} \bibinfo{volume}{13},
  \bibinfo{number}{5} (\bibinfo{year}{2020}), \bibinfo{pages}{670--683}.
\newblock
\showISSN{21508097}
\showeprint[arxiv]{1906.06826}


\bibitem[Yang et~al\mbox{.}(2020b)]%
        {Yang2021}
\bibfield{author}{\bibinfo{person}{Renchi Yang}, \bibinfo{person}{Jieming Shi},
  \bibinfo{person}{Xiaokui Xiao}, \bibinfo{person}{Yin Yang},
  \bibinfo{person}{Juncheng Liu}, {and} \bibinfo{person}{Sourav~S. Bhowmick}.}
  \bibinfo{year}{2020}\natexlab{b}.
\newblock \showarticletitle{{Scaling attributed network embedding to massive
  graphs}}.
\newblock \bibinfo{journal}{\emph{VLDB}} \bibinfo{volume}{14},
  \bibinfo{number}{1} (\bibinfo{year}{2020}), \bibinfo{pages}{37--49}.
\newblock
\showISSN{21508097}
\showeprint[arxiv]{2009.00826}


\bibitem[Yang et~al\mbox{.}(2021a)]%
        {yang2021effective}
\bibfield{author}{\bibinfo{person}{Renchi Yang}, \bibinfo{person}{Jieming Shi},
  \bibinfo{person}{Yin Yang}, \bibinfo{person}{Keke Huang},
  \bibinfo{person}{Shiqi Zhang}, {and} \bibinfo{person}{Xiaokui Xiao}.}
  \bibinfo{year}{2021}\natexlab{a}.
\newblock \showarticletitle{Effective and scalable clustering on massive
  attributed graphs}. In \bibinfo{booktitle}{\emph{Proceedings of the Web
  Conference 2021}}. \bibinfo{pages}{3675--3687}.
\newblock


\bibitem[Yu et~al\mbox{.}(2023)]%
        {yu2023mrfs}
\bibfield{author}{\bibinfo{person}{Wei Yu}, \bibinfo{person}{Wenkai Wang},
  \bibinfo{person}{Guangquan Xu}, \bibinfo{person}{Huaming Wu},
  \bibinfo{person}{Hongyan Li}, \bibinfo{person}{Jun Wang},
  \bibinfo{person}{Xiaoming Li}, {and} \bibinfo{person}{Juan Liu}.}
  \bibinfo{year}{2023}\natexlab{}.
\newblock \showarticletitle{MRFS: Mining Rating Fraud Subgraph in Bipartite
  Graph for Users and Products}.
\newblock \bibinfo{journal}{\emph{IEEE Transactions on Computational Social
  Systems}} (\bibinfo{year}{2023}).
\newblock


\bibitem[Zhang et~al\mbox{.}(2019)]%
        {Zhang2019}
\bibfield{author}{\bibinfo{person}{Jie Zhang}, \bibinfo{person}{Yuxiao Dong},
  \bibinfo{person}{Yan Wang}, \bibinfo{person}{Jie Tang}, {and}
  \bibinfo{person}{Ming Ding}.} \bibinfo{year}{2019}\natexlab{}.
\newblock \showarticletitle{{Prone: Fast and scalable network representation
  learning}}. In \bibinfo{booktitle}{\emph{IJCAI}}.
  \bibinfo{pages}{4278--4284}.
\newblock


\bibitem[Zhang et~al\mbox{.}(2017)]%
        {zhang2017learning}
\bibfield{author}{\bibinfo{person}{Yao Zhang}, \bibinfo{person}{Yun Xiong},
  \bibinfo{person}{Xiangnan Kong}, {and} \bibinfo{person}{Yangyong Zhu}.}
  \bibinfo{year}{2017}\natexlab{}.
\newblock \showarticletitle{Learning node embeddings in interaction graphs}. In
  \bibinfo{booktitle}{\emph{Proceedings of the 2017 ACM on Conference on
  Information and Knowledge Management}}. \bibinfo{pages}{397--406}.
\newblock


\bibitem[Zhang et~al\mbox{.}(2018)]%
        {Zhang2018}
\bibfield{author}{\bibinfo{person}{Ziwei Zhang}, \bibinfo{person}{Jian Pei},
  \bibinfo{person}{Peng Cui}, \bibinfo{person}{Xuanrong Yao},
  \bibinfo{person}{Xiao Wang}, {and} \bibinfo{person}{Wenwu Zhu}.}
  \bibinfo{year}{2018}\natexlab{}.
\newblock \showarticletitle{{Arbitrary-order proximity preserved network
  embedding}}.
\newblock \bibinfo{journal}{\emph{KDD}} (\bibinfo{year}{2018}),
  \bibinfo{pages}{2778--2786}.
\newblock
\showISBNx{9781450355520}


\bibitem[Zhou et~al\mbox{.}(2007)]%
        {zhou2007bipartite}
\bibfield{author}{\bibinfo{person}{Tao Zhou}, \bibinfo{person}{Jie Ren},
  \bibinfo{person}{Mat{\'u}{\v{s}} Medo}, {and} \bibinfo{person}{Yi-Cheng
  Zhang}.} \bibinfo{year}{2007}\natexlab{}.
\newblock \showarticletitle{Bipartite network projection and personal
  recommendation}.
\newblock \bibinfo{journal}{\emph{Physical review E}} \bibinfo{volume}{76},
  \bibinfo{number}{4} (\bibinfo{year}{2007}), \bibinfo{pages}{046115}.
\newblock


\bibitem[Zhou et~al\mbox{.}(2023)]%
        {zhou2023co}
\bibfield{author}{\bibinfo{person}{Yuchen Zhou}, \bibinfo{person}{Hongtao Huo},
  \bibinfo{person}{Zhiwen Hou}, \bibinfo{person}{Lingbin Bu},
  \bibinfo{person}{Jingyi Mao}, \bibinfo{person}{Yifan Wang},
  \bibinfo{person}{Xiaojun Lv}, {and} \bibinfo{person}{Fanliang Bu}.}
  \bibinfo{year}{2023}\natexlab{}.
\newblock \showarticletitle{Co-embedding of edges and nodes with deep graph
  convolutional neural networks}.
\newblock \bibinfo{journal}{\emph{Scientific Reports}} \bibinfo{volume}{13},
  \bibinfo{number}{1} (\bibinfo{year}{2023}), \bibinfo{pages}{16966}.
\newblock


\bibitem[Zhu et~al\mbox{.}(2021)]%
        {zhu2021interpreting}
\bibfield{author}{\bibinfo{person}{Meiqi Zhu}, \bibinfo{person}{Xiao Wang},
  \bibinfo{person}{Chuan Shi}, \bibinfo{person}{Houye Ji}, {and}
  \bibinfo{person}{Peng Cui}.} \bibinfo{year}{2021}\natexlab{}.
\newblock \showarticletitle{Interpreting and unifying graph neural networks
  with an optimization framework}. In \bibinfo{booktitle}{\emph{Proceedings of
  the Web Conference 2021}}. \bibinfo{pages}{1215--1226}.
\newblock


\end{thebibliography}

\end{document}